\theoremstyle{plain}
\newtheorem{theorem}{Theorem}[section]
\newtheorem{proposition}[theorem]{Proposition}
\newtheorem{lemma}[theorem]{Lemma}
\newtheorem{corollary}[theorem]{Corollary}
\theoremstyle{definition}
\newtheorem{definition}[theorem]{Definition}
\newtheorem{assumption}[theorem]{Assumption}
\theoremstyle{remark}
\newcommand{\ouralg}{\texttt{ACRL}\xspace}
\newcommand{\ouralginf}{\texttt{ACD}\xspace}
\newcommand{\problem}{A-CMDP\xspace}
\newcommand{\expert}{\texttt{OPT-QoS}\xspace}
\newcommand{\susexpert}{\texttt{Carbon-B}\xspace}
\newcommand{\rl}{\texttt{RL}\xspace}
\newcommand{\crl}{\texttt{CRL}\xspace}
\newcommand{\initrl}{\texttt{Random}\xspace}
\title{ Anytime-Competitive Reinforcement Learning \\with  Policy Prior}
\author{%
Jianyi Yang\\
UC Riverside\\
Riverside, CA, USA\\
\texttt{jyang239@ucr.edu}
\And
Pengfei Li \\
UC Riverside\\
Riverside, CA, USA\\
\texttt{pli081@ucr.edu}
\And
Tongxin Li \\
CUHK Shenzhen\\
Shenzhen, Guangdong, China\\
\texttt{litongxin@cuhk.edu.cn}
\And
Adam Wierman\\
Caltech\\
Pasadena, CA, USA\\
\texttt{adamw@caltech.edu}
\And 
Shaolei Ren\\
UC Riverside\\
Riverside, CA, USA\\
\texttt{shaolei@ucr.edu}
}
\begin{document}

\maketitle

\begin{abstract}
This paper studies the problem of Anytime-Competitive Markov Decision Process (\problem). Existing works on Constrained Markov Decision Processes (CMDPs) aim to optimize the expected reward while constraining the \emph{expected} cost over random dynamics, but the cost in a specific episode can still be unsatisfactorily high. 
In contrast, the goal of \problem is to optimize the expected reward while guaranteeing a bounded cost in \emph{each} round of \emph{any} episode against a policy prior.  We propose a new
algorithm, called Anytime-Competitive Reinforcement Learning (\ouralg), which provably guarantees the anytime cost constraints. The regret analysis shows the policy asymptotically matches the optimal reward achievable under the anytime competitive constraints. Experiments on the application of carbon-intelligent computing verify the reward performance and cost constraint guarantee of \ouralg.
\end{abstract}

\section{Introduction}
 In mission-critical online decision-making problems such as cloud workload scheduling \cite{carbon_aware_computing_radovanovic2022carbon,QoS_aware_scheduling_delimitrou2013paragon}, cooling control in datacenters \cite{hierarchical_RL_cooling_wong2022optimizing,RL_cooling_chervonyi2022semi,Cooling_control_luo2022controlling}, battery management for Electrical Vehicle (EV) charging \cite{UMass_OnlineKnapsack_EV_Charging_Sigmetrics_2021_Competitive2021Sun,li2021information}, and voltage control in smart grids \cite{Voltage_control_shi2022stability,zhou2020reverse}, there is always a need to improve the reward performances while meeting the requirements for some important cost metrics.   In these mission-critical systems, there always exist some policy priors that meet the critical cost requirements, but they may not perform well in terms of the rewards. In the application of cooling control, for example, some rule-based heuristics \cite{safety_first_AI_datacenter_DeepMind2018,Cooling_control_luo2022controlling} have been programmed into the real systems for a long time and have verified performance in maintaining a safe temperature range, but they may not achieve a high energy efficiency.  In this paper, we design a Reinforcement Learning (RL) algorithm with the goal of optimizing the reward performance under the guarantee of cost constraints against a policy prior for any round in any episode.

Constrained RL algorithms have been designed to solve various Constrained MDPs (CMDPs) with reward objective and cost constraints.   Among them, some are designed to guarantee the expected cost constraints \cite{constrained_MDPs_vaswani2022near,RL_triple_q_wei2022triple}, some can guarantee the cost constraints with a high probability (w.h.p.) \cite{SafeRL_ProbabilisticConstrained_arXiv_2022_https://doi.org/10.48550/arxiv.2210.00596}, and others guarantee a bounded violation of the cost constraints \cite{constrained_MDP_efroni2020exploration,constrained_RL_linear_approximation_ghosh2022provably,safe_exploartion_primal_dual_ding2021provably,constrained_RL_primal_dual_ding2020natural,Conservative_ConstrainedPolicyOptimization_achiam2017constrained}. In addition, conservative RL algorithms \cite{ conservative_exploration_garcelon2020conservative,Conservative_RL_Bandits_LiweiWang_SimonDu_ICLR_2022_yang2022a,katariya2019conservative,conservative_bandits_wu2016conservative} consider constraints that require the performance of a learning agent is no worse than a known baseline policy in expectation or with a high probability. \cite{conservative_exploration_garcelon2020conservative} points out that it is impossible to guarantee the constraints for any round with probability one while such guarantees are desirable.   In real mission-critical applications, the cost constraints are often required to be satisfied at each round in any episode even in the worst case, but such anytime constraints are not guaranteed by the existing constrained/conservative RL policies. Recently, learning-augmented online control algorithms \cite{li2023certifying,Control_RobustConsistency_LQC_TongxinLi_Sigmetrics_2022_10.1145/3508038,regret_LQR_predictions_zhang2021regret,Online_untrusted_predictions_rutten2022online,optimal_christianson2023,expert_robustified_learning_infocom2023} have been developed to exploit machine learning predictions with the worst-case control performance guarantee. Nonetheless, the learning-augmented control algorithms require the full knowledge of the dynamic models, which limits their applications in many systems with unknown random dynamic models. A summary of most relevant works is given in Table \ref{table:compare}.

To fill in this technical blank, we model the mission-critical decision-making problem as a new Markov Decision Process (MDP)  which is called the Anytime-Competitive MDP (\problem).
In \problem, the environment feeds back a reward and a cost corresponding to the selected action at each round. The next state is updated based on a random dynamic model which is a function of the current action and state and is not known to the agent. The distribution of the dynamic model is also unknown to the agent and needs to be learned. Importantly, at each round $h$ in any episode, the policy of \problem must guarantee that the cumulative cost $J_h$ is upper bounded by a scaled cumulative cost of the policy prior $\pi^{\dagger}$ plus a relaxation, i.e. $J_h\leq (1+\lambda)J_h^{\dagger}+hb$ with $\lambda,b>0$, which is called an \emph{anytime} competitive constraint or \emph{anytime} competitiveness. The term "competitive" or "competitiveness" is used to signify the performance comparison with a policy prior.   Under the anytime cost competitiveness for all rounds, the RL agent explores the policy to optimize the expected reward. 

The anytime competitiveness guarantee is more strict than the constraints in typical constrained or conservative MDPs, which presents new challenges for the RL algorithm design. First of all, the anytime competitive constraints are required to be satisfied for any episode, even for the early episodes when the collected sequence samples are not enough. Also, to guarantee the constraints for each round, we need to design a safe action set for each round to ensure that feasible actions exist to meet the constraints in subsequent rounds. Last but not least, without knowing the full transition model, the agent has no access to the action sets defined by the anytime competitive constraints. Thus, in comparison to the control settings with known transition models \cite{Control_RobustConsistency_LQC_TongxinLi_Sigmetrics_2022_10.1145/3508038}, ensuring the anytime competitiveness for MDPs is more challenging.  

\textbf{Contributions}. In this paper, we design algorithms to solve the novel problem of \problem. The contributions are summarized as follows. First, we propose an Anytime-Competitive Decision-making (\ouralginf) algorithm to provably guarantee the anytime competitive constraints for each episode. The key design in \ouralginf is a projection to a safe action set in each round. The safe action set is updated at each round according to a designed rule to gain as much flexibility as possible to optimize the reward. Then, we develop a new model-based RL algorithm (\ouralg) to learn the optimal ML model used in \ouralginf. The proposed model-based RL can effectively improve the reward performance based on the new dynamic defined by \ouralginf. Last but not least, we give rigorous analysis on the reward regret of \ouralg compared with the optimal-unconstrained policy. The analysis shows that the learned policy performs as well as the optimal \ouralginf policy and there exists a fundamental trade-off between the optimization of the average reward and the satisfaction of the anytime competitive constraints.

\begin{table*}[t]
\scriptsize
\centering
\begin{tabular}{ l|c|cc|c|c} 
\toprule
\multirow{2}{10em}{\hfil \bf Methods}& \bf Unknown&\multicolumn{2}{|c|}{\bf Expected constraints or constriants w.h.p.}&{\hfil \bf Any-episode }&{\bf Anytime-competitive} \\
&\bf dynamic&\bf With violation & \bf No violation &{\hfil\bf constraints}&{\bf constraints}  \\ 
\midrule
Learning-augmentation
& \ding{56}&N/A& N/A&\cite{Control_RobustConsistency_LQC_TongxinLi_Sigmetrics_2022_10.1145/3508038,Online_untrusted_predictions_rutten2022online,optimal_christianson2023}&\ding{56}  \\
Constrained RL
& \ding{52}&\cite{constrained_RL_linear_approximation_ghosh2022provably,safe_exploartion_primal_dual_ding2021provably,constrained_RL_primal_dual_ding2020natural,Conservative_ConstrainedPolicyOptimization_achiam2017constrained,Conservative_ProjectBasedConstrainedPolicyOptimizatino_ICLR_2020_Yang2020Projection-Based, constrained_MDP_efroni2020exploration}& \cite{RL_triple_q_wei2022triple,constrained_MDPs_vaswani2022near}& \ding{56} & \ding{56}  \\
Conservative RL
& \ding{52}&N/A &\cite{Conservative_RL_Bandits_LiweiWang_SimonDu_ICLR_2022_yang2022a,conservative_exploration_garcelon2020conservative,katariya2019conservative,conservative_bandits_wu2016conservative} & \ding{56}& \ding{56}  \\
\ouralg \textbf{(this work)} &\ding{52}& N/A&\ding{52}&\ding{52}&\ding{52}\\ 
\bottomrule
\end{tabular}
\caption{Comparison between \ouralg and most related works.}\label{table:compare}
\end{table*}

\section{Related Work}\label{sec:related}

\textbf{Constrained RL}.
Compared with the existing literature on constrained RL \cite{Conservative_RL_Bandits_LiweiWang_SimonDu_ICLR_2022_yang2022a,Conservative_ConstrainedPolicyOptimization_achiam2017constrained,Conservative_ProjectBasedConstrainedPolicyOptimizatino_ICLR_2020_Yang2020Projection-Based,SafeRL_AlmostSureViolationConstraint_JHU_2022_castellano2021reinforcement,SafeRL_ProbabilisticConstrained_arXiv_2022_https://doi.org/10.48550/arxiv.2210.00596,SafeRL_LinearFunction_YangLin_UCLA_ICML_2021_pmlr-v139-amani21a,safe_exploartion_primal_dual_ding2021provably,constrained_RL_linear_approximation_ghosh2022provably,constrained_MDP_efroni2020exploration,constrained_RL_primal_dual_ding2020natural,constrained_MDPs_vaswani2022near},
our study has important differences.
Concretely,
the existing constrained RL works consider an \emph{average}  constraint with or without constraint violation. In addition, existing conservative RL works \cite{Conservative_RL_Bandits_LiweiWang_SimonDu_ICLR_2022_yang2022a} consider an \emph{average}  constraint compared with a policy prior. However, the constraints can be violated especially at early exploration episodes.
In sharp contrast, our anytime competitive constraint ensures a strict constraint for any round in each episode, which has not been studied in the existing literature as shown in Table \ref{table:compare}. In fact, with the same policy prior, 
our anytime competitive policy can also meet the average constraint without violation in conservative/constrained RL \cite{Conservative_RL_Bandits_LiweiWang_SimonDu_ICLR_2022_yang2022a,Conservative_Bandits_SamplePath_LongboHuang_AAAI_2021_DBLP:conf/aaai/DuWH21}.  \cite{Saute_rl_sootla2022saute}
considers MDPs that satisfy safety constraint with probability one and proposes an approach with a high empirical performance. However, there is no theoretical guarantee for constraint satisfaction. Comparably, our method satisfies the constraint with theoretical guarantee, which is essential to deploy AI for mission-critical applications.

Our study is also relevant to safe RL. Some studies on safe RL \cite{SafeRL_AlmostSureViolationConstraint_JHU_2022_castellano2021reinforcement,SafeRL_LinearFunction_YangLin_UCLA_ICML_2021_pmlr-v139-amani21a,SafeRL_ProbabilisticConstrained_arXiv_2022_https://doi.org/10.48550/arxiv.2210.00596} focus on constraining that the system state or action at each time $h$  cannot fall into certain pre-determined restricted regions (often with a high probability), which
is orthogonal to our anytime competitiveness requirement that constrains the cumulative cost at each round
of an episode. Our study is related to RL with safety constraints \cite{SafeRL_AlmostSureViolationConstraint_JHU_2022_castellano2021reinforcement,safe_adaptive_control_LQR_regret_li2021safe}, but is highlighted by the strict constraint guarantee for each round in each episode. 
In a study on safe RL \cite{SafeRL_AlmostSureViolationConstraint_JHU_2022_castellano2021reinforcement}, the number of safety violation events is constrained almost surely by a budget given in advance, but the safety violation value can still be unbounded. By contrast, our work strictly guarantees the anytime competitive constraints by designing the safety action sets. In a recent study \cite{amani2021safe}, the safety requirement is formulated as the single-round cost constraints. Differently, we consider cumulative cost constraints which have direct motivation from mission-critical applications.

\textbf{Learning-augmented online decision-making}. Learning-based policies can usually achieve good average performance but suffer from unbounded worst-case performance. To meet the requirements for the worst-case performance of learning-based policies, learning-augmented  algorithms are developed for online control/optimization problems \cite{Control_RobustConsistency_LQC_TongxinLi_Sigmetrics_2022_10.1145/3508038,Online_untrusted_predictions_rutten2022online,optimal_christianson2023,expert_robustified_learning_infocom2023,Shaolei_L2O_ExpertCalibrated_SOCO_SIGMETRICS_Journal_2022}. To guarantee the performance for each problem instance, learning-augmented algorithm can perform an online switch between ML policy and prior \cite{Online_untrusted_predictions_rutten2022online}, combine the ML policy and prior with an adaptive parameter \cite{Control_RobustConsistency_LQC_TongxinLi_Sigmetrics_2022_10.1145/3508038}, or project the ML actions into safe action sets relying on the prior actions \cite{expert_robustified_learning_infocom2023}. Compared with learning-augmented algorithms, we consider more general online settings without knowing the exact dynamic model. Also, our problem  can guarantee the cost performance for each round in any episode compared with a policy prior, which has not been studied by existing learning-augmented algorithms.

\section{Problem Formulation}

\subsection{Anytime-Competitive MDP}\label{sec:A-CMDP}
In this section, we introduce the setting of a novel MDP problem called Anytime-Competitive Markov Decision Process (\problem), denoted as $\mathcal{M}(\mathcal{X},\mathcal{A}, \mathcal{F}, g, H, r, c, \pi, \pi^{\dagger})$.
In \problem, each episode has $H$ rounds. The state at each round is denoted as $x_h\in\mathcal{X}, h\in[H]$. At each round of an episode, the agent selects an action $a_h$ from an action set $\mathcal{A}$. The environment generates a reward $r_h(x_h,a_h)$ and a cost $c_h(x_h,a_h)$ with $r_h\in\mathcal{R}$ and $c_h\in\mathcal{C}$. We model the dynamics as $x_{h+1}=f_h(x_h,a_h)$ where $f_h\in\mathcal{F}$ is a random transition function drawn from an unknown distribution $g(f_h)$ with the density $g\in\mathcal{G}$. The agent has no access to the random function $f_h$ but can observe the state $x_h$ at each round $h$. Note that we model the dynamics in a function style for ease of presentation, and this dynamic model can be translated into the transition probability in standard MDP models \cite{MDP_thomas2007markov,Lipschitz_model_based_RL_asadi2018lipschitz} as 
$
\mathbb{P}(x_{h+1}\mid x_h, a_h)=\sum_{f_h}\mathds{1}(f_h(x_h, a_h)=x_{h+1})g(f_h).
$
A policy $\pi$ is a function which gives the action $a_h$ for each round $h\in[H]$. Let $V_{h}^{\pi}(x)=\mathbb{E}\left[ \sum_{i=h}^H r_i(x_i,a_i))\mid x_h=x\right]$ denote the expected value of the total reward from round $h$ by policy $\pi$. One objective of \problem is to maximize the expected total reward starting from the first round which is denoted as $\mathbb{E}_{x_1}\left[V_{1}^{\pi}(x_1)\right]=\mathbb{E}\left[ \sum_{h=1}^Hr_h(x_h,a_h))\right]$. 

Besides optimizing the expected total reward as in existing MDPs, \problem also guarantees the anytime competitive cost constraints compared with a policy prior $\pi^{\dagger}$. The policy prior can be a policy that has verified cost performance in real systems or a heuristic policy with strong empirically-guaranteed cost performance, for which concrete examples will be given in the next section. Denote $y_h=(f_h,c_h,r_h)$, and $y_{1:H} = \{y_h\}_{h=1}^H\in\mathcal{Y}=\mathcal{F}\times \mathcal{R}\times \mathcal{C}$ is a sampled sequence of the models in an \problem. Let $J_h^{\pi}(y_{1:H})=\sum_{i=1}^h c_i(x_i,a_i)$ be the cost up to round $h\in[H]$ with states $x_i, i\in[h]$ and actions $a_i, i\in[h]$ of a policy $\pi$. Also, let $J_h^{\pi^{\dagger}}(y_{1:H})=\sum_{i=1}^h c_i(x_i^{\dagger},a_i^{\dagger})$ be the cost of the prior with states $x_i^{\dagger}, i\in[h]$ and actions $a_i^{\dagger}, i\in[h]$ of the prior $\pi^{\dagger}$. The anytime competitive constraints are defined as below.
\begin{definition}[Anytime competitive constraints]\label{def:anaytime_const}
If a policy $\pi$ satisfies $(\lambda,b)-$anytime competitiveness, the cost of  $\pi$ never exceeds the cost of the policy prior $\pi^{\dagger}$ relaxed by parameters $\lambda\geq 0$ and $b\geq 0$, i.e. for any round $h$ in any model sequence $y_{1:H}\in\mathcal{Y}$, it holds that 
$J_h^{\pi}(y_{1:H})
\leq (1+\lambda)J_h^{\pi^{\dagger}}(y_{1:H})+hb.
$ 
\end{definition}
Now, we can formally express the objective of \problem with $\Pi$ being the policy space as
\begin{equation}\label{eqn:target}
\begin{split}
\max_{\pi\in\Pi}  \mathbb{E}_{x_1}\left[V_{1}^{\pi}(x_1)\right],
\;\;\;s.t.\;\; J_h^{\pi}(y_{1:H})
\leq (1+\lambda)J_h^{\pi^{\dagger}}(y_{1:H})+hb,\;\; \forall h\in[H], \forall y_{1:H}\in\mathcal{Y}.
\end{split}
\end{equation}
Let $\Pi_{\lambda, b}$ be the collection of policies that satisfy the anytime competitive constraints in \eqref{eqn:target}.  We design an anytime-competitive RL algorithm that explores the policy space $\Pi_{\lambda, b}$ in $K$ episodes to optimize the expected reward  $\mathbb{E}_{x_1}\left[V_{1}^{\pi}(x_1)\right]$. Note that different from constrained/conservative MDPs \cite{constrained_MDP_efroni2020exploration,constrained_RL_linear_approximation_ghosh2022provably,constrained_MDPs_vaswani2022near,Conservative_RL_Bandits_LiweiWang_SimonDu_ICLR_2022_yang2022a,Conservative_ConstrainedPolicyOptimization_achiam2017constrained,Conservative_ProjectBasedConstrainedPolicyOptimizatino_ICLR_2020_Yang2020Projection-Based}, the anytime competitive constraints in \eqref{eqn:target} must be satisfied for any round in any sampled episode $y_{1:H}\in\mathcal{Y}$ given relaxed parameters $\lambda,b \geq 0$. To evaluate the performance of the learned policy $\pi^k\in\Pi_{\lambda, b}, k \in[K]$ and the impact of the anytime competitive constraints, we consider the regret performance metric defined as
\begin{equation}\label{eqn:regret}
\mathrm{Regret}(K) = \sum_{k=1}^K\mathbb{E}_{x_1}\left[V_{1}^{\pi^*}(x_1)-V_{1}^{\pi^{k}}(x_1)\right], \mathrm{with}\: \pi^k\in\Pi_{\lambda, b}
\end{equation}
where $\pi^{*}=\arg\max_{\pi\in \Pi} \mathbb{E}_{x_1}\left[V_{1}^{\pi}(x_1)\right]$ is an optimal policy without considering the anytime competitive constraints.
When $\lambda$ or $b$ becomes larger, the constraints get less strict and the algorithm has more flexibility to minimize the regret in \eqref{eqn:regret}. Thus, the regret analysis will show the trade-off between optimizing the expected reward and satisfying the anytime cost competitiveness.

In this paper, we make additional assumptions on the cost functions, transition functions, and the prior policy which are important for the anytime-competitive algorithm design and analysis.
\begin{assumption}\label{asp:Lips}
All the cost functions in the space $\mathcal{C}$ have a minimum value $\epsilon\geq 0$, i.e. $\forall (x,a), \forall h\in[H], c_h(x,a)\geq \epsilon\geq 0$, and are $L_{c}$-Lipschitz continuous with respect to action $a_h$ and the state $x_h$. All the transition functions in the space $\mathcal{F}$ are $L_{f}$-Lipschitz continuous with respect to action $a_h$ and the state $x_h$. The parameters $\epsilon, L_c$ and $L_f$ are known to the agent.
\end{assumption}
 The Lipschitz continuity of cost and transition functions can also be found in other works on model-based MDP \cite{model-based_RL_luo2022survey,Lipschitz_model_based_RL_asadi2018lipschitz,Lips_MDP_gelada2019deepmdp}. The Lispchitz assumptions actually apply to many continuous mission-critical systems like cooling systems \cite{RL_cooling_chervonyi2022semi}, power systems \cite{Voltage_control_shi2022stability,RL_power_system_chen2021reinforcement} and carbon-aware datacenters \cite{ carbon_aware_computing_radovanovic2022carbon}. In these systems, the agents have no access to concrete cost and transition functions, but they can evaluate the Lipschitz constants of cost and dynamic functions based on the prior knowledge of the systems. The minimum cost value can be as low as zero, but the knowledge of a positive minimum cost $\epsilon$ can be utilized to improve the reward performance which will be discussed in Section \ref{sec:regret}.

\begin{definition}[Telescoping policy]
\label{def:telescoping}
A policy $\pi$ satisfies the telescoping property if the policy is applied from round $h_1$ to $h_2$ with initialized states $x_{h_1}$ and $x'_{h_1}$, it holds for the corresponding states $x_{h_2}$ and $x'_{h_2}$ at round $h_2$ that
\begin{equation}
\|x_{h_2}-x'_{h_2}\|\leq p(h_2-h_1) \|x_{h_1}-x'_{h_1}\|,
\end{equation}
where $p(h)$ is called a perturbation function with $h$ and $p(0)=1$.
\end{definition}
\begin{assumption}\label{asp:telescoping}
The prior policy $\pi^{\dagger}$ satisfies the telescoping property with some perturbation function $p$. Furthermore, $\pi^{\dagger}$ is Lipschitz continuous.
\end{assumption}
The telescoping property in Definition~\ref{def:telescoping} indicates that with an initial state perturbation at a fixed round, the maximum divergence of the states afterwards is bounded. Thus, the perturbation function $p$ measures the sensitivity of the state perturbation with respect to a policy prior. 
The telescoping property is satisfied for many policy priors \cite{contraction_tsukamoto2021contraction,perturbation_based_regret_predictive_control_lin2021perturbation}. It is also assumed for perturbation analysis in model predictive control \cite{bounded_regret_MPC_lin2022bounded}.

Note that in \problem, the constraints are required to be satisfied for any round in any sequence, which is much more stringent than constraint satisfaction in expectation or with a high probability. In fact, the any-time constraints cannot be theoretically guaranteed without further knowledge on the system \cite{safe_learning_robotics_brunke2022safe,Saute_rl_sootla2022saute}. This paper firstly shows that  Assumption \ref{asp:Lips} and Assumption \ref{asp:telescoping}, which are reasonable for many mission-critical applications \cite{RL_cooling_chervonyi2022semi,Voltage_control_shi2022stability,RL_power_system_chen2021reinforcement,carbon_aware_computing_radovanovic2022carbon}, are enough to guarantee the anytime competitive constraints, thus advancing the deployment of RL in mission-critical applications.

\subsection{Motivating Examples}
The anytime competitiveness has direct motivations from many mission-critical control systems. We present two examples in this section and defer other examples to the appendix.

\textbf{Safe cooling control in data centers.} 
In mission-critical infrastructures like data centers, the agent needs to make decisions on cooling equipment management to maintain a temperature range and achieve a high energy efficiency. Over many years, rule-based policies have been used in cooling systems and have verified cooling performance in maintaining a suitable temperature for computing \cite{Cooling_control_luo2022controlling}.
Recently, RL algorithms are developed for cooling control in data centers to optimize the energy efficiency \cite{hierarchical_RL_cooling_wong2022optimizing,RL_cooling_chervonyi2022semi,Cooling_control_luo2022controlling}. The safety concerns of RL policies, however, hinder their deployment in real systems. In data centers, an unreliable cooling policy can overheat devices and denial critical services, causing a huge loss \cite{safety_first_AI_datacenter_DeepMind2018,Cooling_control_luo2022controlling}. The safety risk is especially high at the early exploration stage of RL in the real environment. Therefore, it is crucial to guarantee the constraints on cooling performance at anytime in any episode for safety. With the reliable rule-based policies as control priors, \problem can accurately model the critical parts of the cooling control problem, opening a path towards learning reliable cooling policies for data centers.

\textbf{Workload scheduling in carbon-intelligent computing.} The world is witnessing a growing demand for computing power due to new computing applications. The large carbon footprint of computing has become a problem that cannot be ignored \cite{carbon_aware_computing_radovanovic2022carbon,sustainable_AI_wu2022sustainable,green_AI_schwartz2020green,measure_carcon_intensity_dodge2022measuring}. Studies find that the amount of carbon emission per kilowatt-hour on electricity grid varies with time and locations due to the various types of electricity generation \cite{carbon_emission_li2017marginal,temporal_carbon_intensity_khan2019temporal,callaway2018location}. Exploiting the time-varying property of carbon efficiency, recent studies are developing workload scheduling policies (e.g. delay some temporally flexible workloads) to optimize the total carbon efficiency \cite{carbon_aware_computing_radovanovic2022carbon}. However, an unreliable workload scheduling policy in data centers can cause a large computing latency, resulting in an unsatisfactory Quality of Service (QoS).  Thus, to achieve a high carbon efficiency while guaranteeing a low computing latency, we need to solve an \problem which leverages RL to improve
the carbon efficiency while guaranteeing the QoS constraints compared with a policy prior targeting at computing latency \cite{QoS_aware_scheduling_delimitrou2013paragon,goiri2011greenslot,chen2019parties,zhou2019flexible,zhou2021asymptotically}. This also resembles the practice of carbon-intelligent computing adopted by Google \cite{carbon_aware_computing_radovanovic2022carbon}.

\section{Methods}\label{sec:method}
In this section, we first propose an algorithm to guarantee the anytime competitive constraints for any episode, and then give an RL algorithm to achieve a high expected reward under the guarantee of the anytime competitive constraints.

\subsection{Guarantee the Anytime Constraints}
It is challenging to guarantee the anytime competitive constraints in \eqref{eqn:target} for an RL policy in any episode due to the following. First of all, in MDPs, the agent can only observe the \emph{real} states $\{x_h\}_{h=1}^H$ corresponding to the truly-selected actions $\{a_h\}_{h=1}^H$. The agent does not select the actions $\smash{a_h^{\dagger}}$ of the prior, so the states of the prior $\smash{x_h^{\dagger}}$ are \emph{virtual} states that are not observed. Thus, the agent cannot evaluate the prior cost $\smash{J_h^{\pi^{\dagger}}}$ which is in the anytime competitive constraint at each round $h$. Also, the action at each round $h$ has an impact on the costs in the future rounds $i, i>h$ based on the random transition models $f_i, i\geq h$. Thus, besides satisfying the constraints in the current round, we need to have a good planning for the future rounds to avoid any possible constraint violations even without the exact knowledge of transition and/or cost models. Additionally, the RL policy may be arbitrarily bad in the environment and can give high costs (especially when very limited training data is available), making constraint satisfaction even harder.

Despite the challenges, we design safe action sets $\{\mathcal{A}_h, h\in[H]\}$ to guarantee the anytime competitive constraints: if action $a_h$ is strictly selected from $\mathcal{A}_h$ for each round $h$, the anytime competitive constraints for all rounds are guaranteed.  As discussed above, the anytime competitive constraints cannot be evaluated at any time since the policy prior's state and cost information is not available. Thus, we propose to convert the original anytime competitive constraints into constraints that only depend on the known parameters and the action differences between the real policy and the policy prior. We give the design of the safe action sets based on the next proposition. For the ease of presentation, we denote $c_i=c_i(x_i,a_i)$ as the real cost and $\smash{c_i^{\dagger}=c_i(x_i^{\dagger},\pi(x_i^{\dagger}))}$ as the cost of the policy prior at round $i$. 
\begin{proposition}\label{thm:allowed_deviation_1}
Suppose that Assumption \ref{asp:Lips} and \ref{asp:telescoping} are satisfied. At round $h$ with costs $\{c_i\}_{i=1}^{h-1}$ observed, the anytime competitive constraints $J_{h'}^{\pi}
\leq (1+\lambda)J_{h'}^{\pi^{\dagger}}+h'b$ for rounds $h'=h,\cdots, H$ are satisfied if for all subsequent rounds $h'=h,\cdots, H$,
\begin{equation}
\begin{split}
&\sum_{j=h}^{h'}\Gamma_{j,j}\|a_j-\pi^{\dagger}(x_{j})\|\leq G_{h,h'}, \:\forall h'=h,\cdots, H,
\end{split}
\end{equation}
where $\Gamma_{j,n}=\sum_{i=n}^{H}q_{j,i}, (j\in [H],\forall n\geq j)$,  with $q_{j,i} = L_c\mathds{1}(j=i)+L_c(1+L_{\pi^{\dagger}})L_{f}p(i-1-j)\mathds{1}(j<i) , (\forall j\in[H], i\geq j),$ relying on known parameters, and $G_{h,h'}$ is called the allowed deviation which is expressed as 
\begin{equation}
G_{h,h'}=\sum_{i=1}^{h-1}\left((1+\lambda)\hat{c}_i^{\dagger}-c_i-\Gamma_{i,h}d_i\right)+(h'-h+1)(\lambda\epsilon+b),
\end{equation}
where $\hat{c}_i^{\dagger}=\max \left\{\epsilon, c_i-\sum_{j=1}^iq_{j,i}d_j \right\}, (\forall i \in [H]),$ is the lower bound of of $c_i^{\dagger}$, and $d_j=\|a_j-\pi^{\dagger}(x_{j})\|, \forall j\in[H]$ is the action difference at round $j$.
\end{proposition}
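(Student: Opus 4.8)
The plan is to reduce the cumulative anytime constraint $J_{h'}^{\pi}\le(1+\lambda)J_{h'}^{\pi^{\dagger}}+h'b$ to a per-round cost-perturbation estimate and then sum over rounds. The key lemma I would prove first is the per-round bound
\[
c_i-c_i^{\dagger}\ \le\ \sum_{j=1}^{i}q_{j,i}\,d_j,\qquad\forall\, i\in[H],
\]
which, together with $c_i^{\dagger}\ge\epsilon$ (Assumption~\ref{asp:Lips}), immediately yields $c_i^{\dagger}\ge\hat c_i^{\dagger}$ and so justifies the claim that $\hat c_i^{\dagger}$ lower-bounds $c_i^{\dagger}$. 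To establish the per-round bound I would introduce, for each $j\le i$, the hybrid trajectory $u^{(j)}$ that follows the real actions $a_1,\dots,a_{j-1}$ up to the real state $x_j$ and then runs the prior $\pi^{\dagger}$ for all rounds $\ge j$; then $u^{(i)}_i=x_i$ (the real state), while $u^{(1)}_i=x_i^{\dagger}$ (the pure-prior state, since both trajectories start from the common initial state). Telescoping $x_i-x_i^{\dagger}=\sum_{j=1}^{i-1}\bigl(u^{(j+1)}_i-u^{(j)}_i\bigr)$: at round $j+1$ the two consecutive hybrids differ only because one used $\pi^{\dagger}(x_j)$ and the other used $a_j$, so $L_f$-Lipschitzness of $f_j$ gives $\|u^{(j+1)}_{j+1}-u^{(j)}_{j+1}\|\le L_f d_j$; and since from round $j+1$ onward both hybrids evolve under the \emph{same} policy $\pi^{\dagger}$, the telescoping property of $\pi^{\dagger}$ (Definition~\ref{def:telescoping}, Assumption~\ref{asp:telescoping}) propagates this to $\|u^{(j+1)}_i-u^{(j)}_i\|\le p(i-1-j)L_f d_j$ (with $p(0)=1$ covering the base case $i=j+1$). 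Summing gives $\|x_i-x_i^{\dagger}\|\le L_f\sum_{j<i}p(i-1-j)d_j$. Finally, $L_c$-Lipschitzness of $c_i$ plus the triangle split $\|a_i-\pi^{\dagger}(x_i^{\dagger})\|\le d_i+L_{\pi^{\dagger}}\|x_i-x_i^{\dagger}\|$ (using Lipschitzness of $\pi^{\dagger}$) reproduces exactly the coefficients $q_{i,i}=L_c$ and $q_{j,i}=L_c(1+L_{\pi^{\dagger}})L_f p(i-1-j)$ for $j<i$.

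\textbf{Summing up.} Writing $J_{h'}^{\pi}-(1+\lambda)J_{h'}^{\pi^{\dagger}}=\sum_{i=1}^{h'}\bigl(c_i-(1+\lambda)c_i^{\dagger}\bigr)$, I would split at the current round $h$. For the observed rounds $i\le h-1$ I would bound $-(1+\lambda)c_i^{\dagger}\le-(1+\lambda)\hat c_i^{\dagger}$. For the future rounds $h\le i\le h'$ I would write $c_i-(1+\lambda)c_i^{\dagger}=(c_i-c_i^{\dagger})-\lambda c_i^{\dagger}\le\sum_{j\le i}q_{j,i}d_j-\lambda\epsilon$, then interchange the order of summation, $\sum_{i=h}^{h'}\sum_{j\le i}q_{j,i}d_j=\sum_{j}d_j\sum_{i=\max(j,h)}^{h'}q_{j,i}$, and bound the inner sum conservatively (all $q_{j,i}\ge0$) by $\Gamma_{j,h}=\sum_{i=h}^{H}q_{j,i}$ when $j<h$ and by $\Gamma_{j,j}=\sum_{i=j}^{H}q_{j,i}$ when $j\ge h$. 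Collecting everything, $J_{h'}^{\pi}-(1+\lambda)J_{h'}^{\pi^{\dagger}}$ is at most $\sum_{i=1}^{h-1}\bigl(c_i-(1+\lambda)\hat c_i^{\dagger}+\Gamma_{i,h}d_i\bigr)+\sum_{j=h}^{h'}\Gamma_{j,j}d_j-(h'-h+1)\lambda\epsilon$; imposing that this be at most $h'b$ and comparing with the definition of $G_{h,h'}$ shows the constraint is implied by $\sum_{j=h}^{h'}\Gamma_{j,j}d_j\le G_{h,h'}$ (the nonnegative term $(h-1)b$ from the past rounds is simply discarded, which only makes the sufficient condition stronger) --- precisely the claimed inequality. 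Since this argument is uniform in $h'\in\{h,\dots,H\}$, all anytime constraints from round $h$ on are guaranteed.

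\textbf{Main obstacle.} I expect the crux to be the per-round perturbation estimate: one has to design the hybrid trajectories $u^{(j)}$ so that each link of the telescope is a comparison of two runs of the \emph{same} prior policy $\pi^{\dagger}$ from perturbed starting states --- only then is the telescoping property (Definition~\ref{def:telescoping}) applicable --- and one has to keep track of two separate error sources at round $i$, namely the state mismatch $\|x_i-x_i^{\dagger}\|$ inherited from earlier deviations and the extra mismatch from measuring $d_i$ against $\pi^{\dagger}(x_i)$ rather than $\pi^{\dagger}(x_i^{\dagger})$, which together generate the $(1+L_{\pi^{\dagger}})$ factor in $q_{j,i}$. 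Everything after that is bookkeeping: interchanging the double sum $\sum_{i}\sum_{j}q_{j,i}d_j$, reindexing into the cumulative coefficients $\Gamma_{j,n}$, and applying the uniform lower bounds $c_i^{\dagger}\ge\hat c_i^{\dagger}$ and $c_i^{\dagger}\ge\epsilon$.
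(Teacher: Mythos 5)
Your proposal is correct and follows essentially the same route as the paper's proof: the paper also bounds the state perturbation by telescoping over hybrid trajectories that switch to $\pi^{\dagger}$ at successive rounds (its $x_h^{\dagger(i)}$ are your $u^{(j)}$), derives the same per-round cost bound $|c_i-c_i^{\dagger}|\le\sum_{j\le i}q_{j,i}d_j$ via the Lipschitz constants of $c$, $f$, and $\pi^{\dagger}$, and then splits past/future rounds, uses $\hat c_i^{\dagger}$ and $\epsilon$ as lower bounds on the prior cost, interchanges the double sum into the $\Gamma$ coefficients, and discards the slack $(h-1)b$ exactly as you do.
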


At each round $h\in [H]$, Proposition \ref{thm:allowed_deviation_1} provides a sufficient condition for satisfying all the anytime competitive constraints from round $h$ to round $H$ given in \eqref{eqn:target}. The meanings of the parameters in Proposition \ref{thm:allowed_deviation_1} are explained as follows. The weight $q_{j,i}$ measures the impact of action deviation at round $j$ on the cost difference $\smash{|c_i-c_i^{\dagger}|}$ at round $i\geq j$, and the weight $\Gamma_{j,n}$ indicates the total impact of the action deviation at round $j$ on the sum of the cost differences from rounds $n$ to round $H$. Based on the definition of $q_{j,i}$, we get $\smash{\hat{c}_i^{\dagger}}$ as a lower bound of the prior cost $\smash{c_i^{\dagger}}$. With these bounds, we can calculate the maximum allowed total action deviation compared with the prior actions $\smash{\pi^{\dagger}(x_{j})}$ from round $j=h$ to $h'$ as $G_{h,h'}$

By applying Proposition \ref{thm:allowed_deviation_1} at initialization, we can guarantee the anytime competitive constraints for all rounds $h'\in [H]$ if we ensure that
for all rounds $h'\in [H]$,
$\smash{\sum_{j=1}^{h'}\Gamma_{j,j}\|a_j-\pi^{\dagger}(x_{j})\|\leq G_{1,h'} = h'(\lambda \epsilon +b)}$.
This sufficient condition is a long-term constraint relying on the relaxation parameters $\lambda$ and $b$.
Although we can guarantee the anytime competitive constraints by the sufficient condition obtained at initialization, we apply Proposition \ref{thm:allowed_deviation_1} at all the subsequent rounds with the cost feedback information to get larger action sets 
%allowed deviations 
and more flexibility to optimize the average reward. In this way, we can update the allowed deviation according to the next corollary.
\begin{corollary}\label{thm:deviation_update}
At round 1, we initialize the allowed deviation as $D_1=\lambda \epsilon + b$. At round $h, h>1$, the allowed deviation is updated as
\begin{equation}\label{eqn:deviation_update}
\begin{split}
D_h=\max\left\{D_{h-1}+\lambda \epsilon+b -\Gamma_{h-1,h-1}d_{h-1},\: R_{h-1}+\lambda \epsilon +b
\right\}
\end{split}
\end{equation}
where $R_{h-1}=\sum_{i=1}^{h-1}\left((1+\lambda)\hat{c}_i^{\dagger}-c_i-\Gamma_{i,h}d_i\right)$ with notations defined in Proposition \ref{thm:allowed_deviation_1}. The $(\lambda,b)-$anytime competitive constraints in Definition \ref{def:anaytime_const} are satisfied if it holds at each round $h$ that $\Gamma_{h,h}\|a_h-\pi^{\dagger}(x_h)\|\leq D_h$.
\end{corollary}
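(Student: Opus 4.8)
The plan is to turn the per-round test $\Gamma_{h,h}\|a_h-\pi^{\dagger}(x_h)\|\le D_h$ into the deviation conditions of Proposition \ref{thm:allowed_deviation_1} that certify the anytime competitive constraint, and since everything is argued pointwise along an arbitrary model sequence $y_{1:H}$, Definition \ref{def:anaytime_const} follows. Write $d_j=\|a_j-\pi^{\dagger}(x_j)\|$ throughout, and call round $k$ a \emph{reset round} if the maximum in \eqref{eqn:deviation_update} is attained by its second term, i.e.\ $D_k=R_{k-1}+\lambda\epsilon+b$; round $1$ is a reset round since $D_1=\lambda\epsilon+b$ and the empty sum gives $R_0=0$. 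Fix a target round $h'\in[H]$ and let $h_0=h_0(h')$ be the largest reset round with $h_0\le h'$.

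First I would unroll the recursion from the last reset round. By maximality of $h_0$, for every $k$ with $h_0<k\le h'$ the maximum is attained by its first term, so $D_k=D_{k-1}+\lambda\epsilon+b-\Gamma_{k-1,k-1}d_{k-1}$; telescoping from $k=h_0+1$ up to any $h''\in\{h_0,\dots,h'\}$ gives
\begin{equation*}
\sum_{j=h_0}^{h''-1}\Gamma_{j,j}d_j + D_{h''}= D_{h_0}+(h''-h_0)(\lambda\epsilon+b).
\end{equation*}
Next I would use the per-round hypothesis $\Gamma_{h'',h''}d_{h''}\le D_{h''}$ to absorb the boundary term, which yields $\sum_{j=h_0}^{h''}\Gamma_{j,j}d_j\le D_{h_0}+(h''-h_0)(\lambda\epsilon+b)$. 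Since $h_0$ is a reset round, $D_{h_0}=R_{h_0-1}+\lambda\epsilon+b$ with $R_{h_0-1}=\sum_{i=1}^{h_0-1}\big((1+\lambda)\hat c_i^{\dagger}-c_i-\Gamma_{i,h_0}d_i\big)$, so the right-hand side equals exactly $G_{h_0,h''}$ as defined in Proposition \ref{thm:allowed_deviation_1}. Hence $\sum_{j=h_0}^{h''}\Gamma_{j,j}d_j\le G_{h_0,h''}$ for all $h''=h_0,\dots,h'$, which are precisely the deviation conditions of Proposition \ref{thm:allowed_deviation_1} at round $h_0$ certifying the round-$h'$ constraint; invoking the proposition gives $J_{h'}^{\pi}\le(1+\lambda)J_{h'}^{\pi^{\dagger}}+h'b$. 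As $h'$ and $y_{1:H}$ were arbitrary, the $(\lambda,b)$-anytime competitiveness of Definition \ref{def:anaytime_const} holds.

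The step I expect to be the main obstacle is the bookkeeping around the reset structure: one must confirm that the second index of $\Gamma_{i,h_0}$ buried inside $D_{h_0}$ matches the one appearing in $G_{h_0,\cdot}$, so that the telescoped quantity is literally $G_{h_0,h''}$ rather than merely an upper or lower bound for it, and one must check that the telescoping only ever runs over non-reset rounds. A conceptual point worth stating carefully is that $d_{h'}$ itself sits inside $\sum_{j=h_0}^{h'}\Gamma_{j,j}d_j$, so the test $\Gamma_{h',h'}d_{h'}\le D_{h'}$ must be applied before $a_{h'}$ is committed; the telescoping identity is exactly what lets the online budget $D_{h'}$ encode the residual allowance $G_{h_0,h'}-\sum_{j=h_0}^{h'-1}\Gamma_{j,j}d_j$, a quantity that depends only on rounds already observed.
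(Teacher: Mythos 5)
Your proof is correct, and it organizes the argument differently from the paper. The paper proceeds by a forward induction that maintains, at every round $h$, the invariant family ``$\sum_{j=h}^{h'}\Gamma_{j,j}d_j\le D_h+(h'-h)(\lambda\epsilon+b)$ for all $h'\ge h$ is sufficient,'' obtained by observing that both branches of the max in \eqref{eqn:deviation_update} come from sufficient conditions (the carried-over one from round $h-1$ minus the committed $\Gamma_{h-1,h-1}d_{h-1}$, and the freshly recomputed one from Proposition \ref{thm:allowed_deviation_1}), so their pointwise maximum is again sufficient for each fixed $h'$. You instead fix a target round $h'$, locate the last ``reset'' round $h_0$, unroll the recursion exactly via the telescoping identity $\sum_{j=h_0}^{h''-1}\Gamma_{j,j}d_j+D_{h''}=D_{h_0}+(h''-h_0)(\lambda\epsilon+b)$, and reduce to a single invocation of Proposition \ref{thm:allowed_deviation_1} at $h_0$, having checked that $D_{h_0}+(h''-h_0)(\lambda\epsilon+b)=R_{h_0-1}+(h''-h_0+1)(\lambda\epsilon+b)=G_{h_0,h''}$ with the matching second index $\Gamma_{i,h_0}$. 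The algebra is the same; what your version buys is an explicit identification of which application of the proposition certifies each constraint and an exact (not merely inductive) expression for the residual budget, while the paper's version buys a shorter statement of the invariant. One shared caveat: both arguments use the per-$h'$ sufficiency of the conditions in Proposition \ref{thm:allowed_deviation_1} (the condition at a given $h'$ alone certifies the constraint at that $h'$), which is what the proposition's proof actually establishes even though its statement is phrased as ``all conditions imply all constraints''; your verification of the conditions only up to $h''=h'$ is therefore fine, exactly as in the paper's own use of the proposition.
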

Corollary \ref{thm:deviation_update} gives a direct way to calculate the allowed action deviation at each round.  In the update rule \eqref{eqn:deviation_update} of the allowed deviation, the first term of the maximum operation is based on the deviation calculation at round $h-1$ while the second term is obtained by applying Proposition \ref{thm:allowed_deviation_1} for round $h$. 

We can find that the conditions to satisfy the anytime competitive constraints can be controlled by parameters $\lambda$ and $b$. With larger $\lambda$ and $b$, the anytime competitive constraints are relaxed and the conditions in Corollary \ref{thm:deviation_update} get less stringent. Also, the conditions in Corollary \ref{thm:deviation_update} rely on the minimum cost value $\epsilon$ and other system parameters including Lipschitz constants $L_c, L_f, L_{\pi^{\dagger}}$ and telescoping parameters $p$ through $\Gamma_{h,h}$. Since $\Gamma_{h,h}$ increases with the Lipschitz and telescoping parameters, even if the estimated Lipschitz constants and the telescoping parameters are higher than the actual values or the estimated minimum cost is lower than the actual value, the obtained condition by Corollary \ref{thm:deviation_update} is sufficient to guarantee the anytime competitive constraints, although it is more stringent than the condition calculated by the actual parameters.

By Corollary \ref{thm:deviation_update}, we can define the safe action set at each round $h$ as
\begin{equation}\label{eqn:safe_action_set}
\mathcal{A}_h(D_h)=\left\{a \mid \Gamma_{h,h}\|a-\pi^{\dagger}(x_h)\|\leq D_h \right\}.
\end{equation}

\begin{algorithm}[!t]
	\caption{Anytime-Competitive Decision-making (\ouralginf)}
	\begin{algorithmic}\label{alg:online_inference}
  \STATE \textbf{Initialization:} Initialize an allowed deviation: $D_1=\lambda \epsilon +b$.
	\FOR {$h=1,\cdots, H$ }
 \STATE Obtain the output of the ML policy $\tilde{\pi}$ as $\tilde{a}_h$.
 \STATE Select the action $a_t$ by projecting $\tilde{a}_h$ into the safe action set 
 $\mathcal{A}_h(D_h)$ in \eqref{eqn:safe_action_set}.
 \STATE Update the allowed deviation $D_{h+1}$ by \eqref{eqn:deviation_update}.
	\ENDFOR
	\end{algorithmic}
\end{algorithm} 

With the safe action set design in \eqref{eqn:safe_action_set}, we propose a projection-based algorithm called \ouralginf in Algorithm \ref{alg:online_inference}. We first initialize an allowed deviation as $D_1=\lambda \epsilon +b$. When the output $\tilde{a}_h$ of the ML model is obtained at each round $h$, it is projected into a safe action set $\mathcal{A}_h(D_h)$ depending on the allowed deviation $D_h$, i.e. $a_h = P_{\mathcal{A}_h(D_h)}(\tilde{a}_h) = \arg\min _{a\in \mathcal{A}_h(D_h)} \|a- \tilde{a}_h\|$.
The projection can be efficiently solved by many existing methods on constrained policy learning \cite{Conservative_ProjectBasedConstrainedPolicyOptimizatino_ICLR_2020_Yang2020Projection-Based,differentiable_projection_chen2021enforcing,Differentiable_optimization_amos2017optnet,Homeomorphic_Projection_chen2021_low_complexity,optimization_with_hard_constraints_donti2021dc3}. The allowed deviation is then updated based on Corollary \ref{thm:deviation_update}. 
 Intuitively, if the actions are closer to the prior actions before $h$, i.e. the action deviations $\{d_{i}\}_{i=1}^{h-1}$ get smaller, then $R_{h-1}$ becomes larger and $D_h$ becomes larger, leaving more flexibility to deviate from $\smash{a_{i}^{\dagger}, i\geq h}$ in subsequent rounds.

\subsection{Anytime-Competitive RL}\label{sec:anytime-constrained_RL}
The anytime competitive constraints have been satisfied by Algorithm \ref{alg:online_inference}, but it remains to design an RL algorithm to optimize the average reward under the anytime competitive cost constraints, which is given in this section.

The anytime-competitive decision-making algorithm in Algorithm \ref{alg:online_inference} defines a new MDP, with an additional set of allowed deviations $\mathcal{D}$ to the A-MDP defined in Section~\ref{sec:A-CMDP}, denoted as
 $\smash{\tilde{\mathcal{M}}(\mathcal{X},\mathcal{D}, \mathcal{A}, \mathcal{F}, g, H, r, c, \tilde{\pi}, \pi^{\dagger})}$. In the new MDP, we define an augmented state $s_h$ which include the original state $x_h$, the allowed deviation $D_h\in\mathcal{D}$,
and history information $\{c_i\}_{i=1}^{h-1}$ and $\{d_i\}_{i=1}^{h-1}$.  
The transition of $x_h$ is defined by $f_h$ in Section \ref{sec:A-CMDP} and needs to be learned while the transition of $D_h$ is defined in \eqref{eqn:deviation_update} and is known to the agent. The ML policy $\tilde{\pi}$ gives an output $\tilde{a}_h$ and the selected action is the projected action $a_h=P_{\mathcal{A}_h(D_h)}(\tilde{a}_h)$. Then the environment generates a reward $r_h(x_h,P_{\mathcal{A}_h(D_h)}(\tilde{a}_h))$ and a cost $c_h(x_h,P_{\mathcal{A}_h(D_h)}(\tilde{a}_h))$. Thus, the value function corresponding to the ML policy $\tilde{\pi}$ can be expressed as $\tilde{V}_h^{\tilde{\pi}}(s_h) = \mathbb{E}\left[\sum_{i=h}^H r_i(x_i,P_{\mathcal{A}_h(D_h)}(\tilde{a}_h))\right]$ with $\tilde{a}_h$ being the output of the ML policy $\tilde{\pi}$. For notation convenience, we sometimes write the actions of $\pi^*$ and $\pi^{\dagger}$ as  $\pi^*(s)$ and $\pi^{\dagger}(s)$ even though they only reply on the original state $x$ in $s$.

To solve the MDP, we propose a model-based RL algorithm called \ouralg in Algorithm \ref{alg:rl}. Different from the existing model-based RL algorithms \cite{Eluder_dim_osband2014model,Model_based_RL_ayoub2020model,RL_discounted_MDP_zhou2021provably}, \ouralg utilizes the dynamic model of \problem and \ouralginf (Algorithm \ref{alg:online_inference}) to optimize the average reward. 
Given a transition distribution $g$ at episode $k$, 
we perform value iteration to update $\tilde{Q}$ functions for $h=1,\cdots, H$.
\begin{equation}\label{eqn:value_iter}
\begin{split}
&\tilde{Q}_{h}^k(s_{h},\tilde{a}_h)= r_h(x_h,a_h)+\mathbb{E}_g\left[\tilde{V}_{h+1}^k(s_{h+1})\mid s_h,a_h\right],\: \tilde{V}_{h}^k(s_h)=\max_{a\in\mathcal{A}}\tilde{Q}_{h}^k(s_h,a),\\
& \mathbb{E}_g\left[\tilde{V}_{h+1}^k(s_{h+1})\mid s_h,a_h\right]=\sum_{f\in\mathcal{F}}\tilde{V}_{h+1}^k(s_{h+1})g(f),
\end{split}
\end{equation}
where $a_h=P_{\mathcal{A}_h(D_h)}(\tilde{a}_h)$, $\tilde{Q}_{H+1,k}(s,a)=0, \tilde{V}_{H+1,k}(s)=0$. The transition model $g$ is estimated as
\begin{equation}\label{eqn:transition_fitting}
\hat{g}^k=\arg\min_{g\in\mathcal{G}}\sum_{i=1}^{k-1}\sum_{h=1}^H \left( \mathbb{E}_g\left[\tilde{V}_{h+1}^i(s_{h+1})\mid s_h,a_h\right]-\tilde{V}_{h+1}^i(s_{h+1})\right)^2.
\end{equation}
Based on the transition estimation, we can calculate the confidence set of the transition model as
\begin{equation}
\mathcal{G}_k \!= \!\left\{g\in\mathcal{G}\left| \sum_{i=1}^{k-1}\sum_{h=1}^H \left( \mathbb{E}_{g}\left[\tilde{V}_{h+1}^i(s_{h+1})\mid s_h,a_h\right]\!-\!\mathbb{E}_{\hat{g}^k}\left[\tilde{V}_{h+1}^i(s_{h+1})\mid s_h,a_h\right]\right)^2\leq \beta_k\right.\right\},
\end{equation}
where $\beta_k>0$ is a confidence parameter.

With a learned ML policy $\tilde{\pi}^k$ at each episode $k$, the policy used for action selection is the \ouralginf policy $\pi^k$. Given the optimal ML policy $\tilde{\pi}^*=\arg\max_{\tilde{\pi}\in\tilde{\Pi}}\tilde{V}_1^{\tilde{\pi}}(s_1)$ with $\tilde{\Pi}$ being the ML policy space, the optimal \ouralginf policy is denoted as $\pi^{\circ}$. For state $s_h$ at round $h$, $\pi^k$ and $\pi^{\circ}$ select actions as
\begin{equation}
 \pi^k(s_h)=P_{\mathcal{A}_h(D_h)}(\tilde{\pi}^k(s_h)), \; \pi^{\circ}(s_h)=P_{\mathcal{A}_h(D_h)}(\tilde{\pi}^*(s_h)).
\end{equation}

In the definition of \problem, the dimension of the augmented state $s_h$ increases with the length of the horizon $H$, which cloud cause a scalability issue for implementation. The scalability issues also exit in other RL works with history-dependent states \cite{RL_history_dependent_context_tennenholtz2023reinforcement,RL_adaptive_non-stationary_env_chen2022adaptive}. In practice, tractable methods can be designed through feature aggregation \cite{RL_history_dependent_context_tennenholtz2023reinforcement} or PODMP \cite{RL_POMDP_xiong2022sublinear}.

\begin{algorithm}[!t]
	\caption{Anytime-Competitive Reinforcement Learning (\ouralg)}
	\begin{algorithmic}[1]\label{alg:rl}
  \STATE \textbf{Initialization:} Transition model set $\mathcal{G}_{1}=\{\hat{g}^1\}$.
	\FOR {each episode $k=1,\cdots, K$ }
 \STATE  Observe the initial state $s_1^k$.
 \STATE Select $g^k=\arg\max_{g\in\mathcal{G}^k} \mathbb{E}_g\left[ V_1(s_1^k)\right]$.
 \STATE  Perform value iteration with $g^k$ in Eqn.~\eqref{eqn:value_iter} and update $\tilde{Q}$ functions $\tilde{Q}_{1}^k \cdots, \tilde{Q}_{H}^k$.
\FOR {each round $h=1,\cdots, H$ }
\STATE Run \ouralginf (Algorithm \ref{alg:online_inference}) by ML policy $\tilde{\pi}^k(s_h)=\arg\max_{a\in\mathcal{A}}\tilde{Q}_{h}^k(s_{h},a)$
 \STATE Observe state $s_{h+1}^k$ and store values $\tilde{V}_{h+1}^k(s_{h+1}^k)$.
 \ENDFOR
  \STATE Update transition model $\hat{g}^{k+1}$ using \eqref{eqn:transition_fitting} and calculate confidence set $\mathcal{G}_{k+1}$.
	\ENDFOR
	\end{algorithmic}
\end{algorithm}

\section{Performance Analysis}\label{sec:analysis}

In this section, we analyze the reward regret of \ouralg to show
the impacts of anytime cost constraints on the average reward.

\subsection{Regret due to Constraint Guarantee}\label{sec:regret}
Intuitively, due to the anytime competitive constraints in Eqn.~\eqref{eqn:target}, there always exists an unavoidable reward gap between an \ouralginf policy and the optimal-unconstrained policy $\pi^*$. In this section, to quantify this unavoidable gap, we bound the regret of the optimal \ouralginf policy $\pi^{\circ}$, highlighting the impact of anytime competitive cost constraints on the average reward performance.
\begin{theorem}\label{thm:reward_bound}
Assume that the optimal-unconstrained policy $\pi^*$ has a value function $Q_h^{\pi^*}(x, a)$ which is $L_{Q,h}$-Lipschitz continuous with respect to the action $a$ for all $x$.  The regret between the optimal \ouralginf policy $\pi^{\circ}$ that satisfies $(\lambda,b)-$anytime competitiveness and the optimal-unconstrained policy $\pi^{*}$ is bounded as
\begin{equation}
\begin{split}
\mathbb{E}_{x_1}\left[V_{1}^{\pi^{*}}(x_1)- V_{1}^{\pi^\circ}(x_1)\right]
\leq \mathbb{E}_{y_{1:H}}\left\{\sum_{h=1}^HL_{Q,h}\left[\eta-\frac{1}{\Gamma_{h,h}}(\lambda \epsilon+b+\Delta G_h)\right]^+\right\},
\end{split}
\end{equation}
where $\eta=\sup_{x\in\mathcal{X}}\|\pi^*(x)-\pi^{\dagger}(x))\|$ is the maximum action discrepancy between the policy prior $\pi^{\dagger}$ and optimal-unconstrained policy $\pi^*$; $\Gamma_{h,h}$ is defined in Proposition~\ref{thm:allowed_deviation_1}; $\Delta G_h = [R_{h-1}]^+$ is the gain of the allowed deviation by applying Proposition \ref{thm:allowed_deviation_1} at round $h$.
\end{theorem}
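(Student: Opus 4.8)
The plan is to bound the regret of $\pi^{\circ}$ by comparing it not against $\pi^*$ directly but against a particular, possibly suboptimal, \ouralginf policy: the one whose underlying ML policy simply outputs the optimal-unconstrained action $\pi^*(x_h)$ at every round. Call the resulting \ouralginf policy $\pi'$, i.e. $\pi'(s_h)=P_{\mathcal{A}_h(D_h)}(\pi^*(x_h))$. Since $\pi^{\circ}$ is the \ouralginf policy induced by the ML policy $\tilde{\pi}^*$ that maximizes $\tilde{V}_1$ over $\tilde{\Pi}$, and the "output $\pi^*(x_h)$" policy lies in $\tilde{\Pi}$, we get $\mathbb{E}_{x_1}[V_1^{\pi^{\circ}}(x_1)]\ge \mathbb{E}_{x_1}[V_1^{\pi'}(x_1)]$, so it suffices to bound $\mathbb{E}_{x_1}[V_1^{\pi^*}(x_1)-V_1^{\pi'}(x_1)]$.

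For this I would use the performance-difference identity: expanding $V_1^{\pi'}$ as the telescoping sum of one-step Bellman backups of $Q_h^{\pi^*}$ along the (random) trajectory generated by $\pi'$, and using $V_h^{\pi^*}(x)=Q_h^{\pi^*}(x,\pi^*(x))$, gives
\[
\mathbb{E}_{x_1}\!\left[V_1^{\pi^*}(x_1)-V_1^{\pi'}(x_1)\right]=\mathbb{E}\!\left[\sum_{h=1}^H\left(Q_h^{\pi^*}(x_h,\pi^*(x_h))-Q_h^{\pi^*}(x_h,\pi'(x_h))\right)\right],
\]
where the expectation is over $x_1$ and $y_{1:H}$ (which, together with the deterministic policy, fix the trajectory). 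The $L_{Q,h}$-Lipschitzness of $Q_h^{\pi^*}$ in the action reduces each summand to $L_{Q,h}\|\pi^*(x_h)-\pi'(x_h)\|$, so the task becomes controlling the per-round projection distance.

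The key geometric observation is that $\mathcal{A}_h(D_h)$ is exactly the closed ball of radius $D_h/\Gamma_{h,h}$ centered at $\pi^{\dagger}(x_h)$, so the Euclidean projection satisfies $\|\pi^*(x_h)-\pi'(x_h)\|=[\,\|\pi^*(x_h)-\pi^{\dagger}(x_h)\|-D_h/\Gamma_{h,h}\,]^+\le[\eta-D_h/\Gamma_{h,h}]^+$ by the definition of $\eta$. To finish I would lower-bound $D_h$ along $\pi'$'s trajectory by $\lambda\epsilon+b+\Delta G_h$: the update rule of Corollary~\ref{thm:deviation_update} gives $D_h\ge R_{h-1}+\lambda\epsilon+b$ immediately, and a short induction using that $\pi'$ always picks its action inside $\mathcal{A}_{h-1}(D_{h-1})$ (hence $\Gamma_{h-1,h-1}d_{h-1}\le D_{h-1}$) gives $D_h\ge\lambda\epsilon+b$; together these yield $D_h\ge\lambda\epsilon+b+[R_{h-1}]^+=\lambda\epsilon+b+\Delta G_h$. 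Substituting, using monotonicity of $[\cdot]^+$ and $\Gamma_{h,h}>0$, taking expectations, and invoking $V_1^{\pi^*}-V_1^{\pi^{\circ}}\le V_1^{\pi^*}-V_1^{\pi'}$ gives the stated bound.

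The main obstacle I anticipate is the first reduction step: one must be careful that the $D_h$ and $\Delta G_h$ appearing in the final bound are those realized along the comparison policy $\pi'$ (the best-case trajectory for tracking $\pi^*$), not an arbitrary trajectory, and that $\tilde{\Pi}$ is rich enough to contain the "output $\pi^*(x_h)$" policy; once this bookkeeping is fixed, the performance-difference/Lipschitz/projection chain is routine. A secondary subtlety is justifying $V_h^{\pi^*}(x)=Q_h^{\pi^*}(x,\pi^*(x))$, i.e. that $\pi^*$ acts greedily with respect to its own $Q$-function, which follows from the principle of optimality for the unconstrained MDP and the Lipschitz-$Q$ assumption in the theorem statement.
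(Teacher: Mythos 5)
Your proposal is correct and follows essentially the same route as the paper: your comparison policy $\pi'$ is exactly the paper's projected policy $\pi^{\perp}$, the optimality of $\tilde{\pi}^*$ over $\tilde{\Pi}$ plays the same role as the paper's step using that $\tilde{\pi}^*$ maximizes $\tilde{Q}$, and the performance-difference/Lipschitz-$Q$/ball-projection chain together with the lower bound $D_h\ge \lambda\epsilon+b+[R_{h-1}]^+$ mirrors the paper's two lemmas. The bookkeeping points you flag (trajectory of $\pi'$, richness of $\tilde{\Pi}$) are handled implicitly the same way in the paper.
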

The regret bound stated in Theorem \ref{thm:reward_bound} is intrinsic and inevitable, due to the committed assurance of satisfying the anytime competitive constraints. Such a bound cannot be improved via policy learning, i.e., converge to $0$ when the number of episodes $K\rightarrow\infty$. This is because to satisfy the $(\lambda,b)-$anytime competitiveness, the feasible policy set $\Pi_{\lambda, b}$ defined under \eqref{eqn:target} is a subset of the original policy set $\Pi$, and the derived regret is an upper bound of $\smash{\max_{\pi\in\Pi}  \mathbb{E}_{x_1}\left[V_{1}^{\pi}(x_1)\right]-\max_{\pi\in\Pi_{\lambda, b}} \mathbb{E}_{x_1}\left[V_{1}^{\pi}(x_1)\right]}$. Moreover, the regret bound relies on the action discrepancy $\eta$. This is because if the optimal-unconstrained policy $\pi^*$ is more different from the prior $\pi^{\dagger}$, its actions are altered to a larger extent to guarantee the constraints, resulting in a larger degradation of the reward performance.  More importantly, the regret bound indicates the trade-off between the reward optimization and anytime competitive constraint satisfaction governed by the parameters $\lambda$ and $b$. When $\lambda$ or $b$ becomes larger, we can get a smaller regret because the anytime competitive constraints in \eqref{eqn:target} are relaxed to have more flexibility to optimize the average reward. In the extreme cases when $\lambda$ or $b$ is large enough, all the policies in $\Pi$ can satisfy the anytime competitive constraints, so we can get zero regret.

Moreover, the regret bound shows that the update of allowed deviation by applying Proposition \ref{thm:allowed_deviation_1} based on the cost feedback at each round will benefit the reward optimization. By the definition of $R_{h-1}$ in Corollary \ref{thm:deviation_update}, if the real actions deviate more from the prior actions before $h$, the gain $\Delta G_i$ for $i\geq h$ can be smaller, so the actions must be closer to the prior actions in the subsequent rounds, potentially causing a larger regret. Thus, it is important to have a good planing of the action differences $\{d_i\}_{i=1}^{H}$ to get larger allowed action deviations for reward optimization. Exploiting the representation power of machine learning, \ouralg can learn a good planning of the action differences, and the \ouralginf policy $\pi^{\circ}$ corresponding to the optimal  ML policy $\tilde{\pi}^*$ can achieve the optimal planing of the action differences.

Last but not least, Theorem \ref{thm:reward_bound} shows the effects of the systems parameters in Assumption \ref{asp:Lips} and Assumption \ref{asp:telescoping} on the regret through $\Gamma_{h,h}$ defined in Proposition~\ref{thm:allowed_deviation_1} and the minimum cost $\epsilon$. Observing that $\Gamma_{h,h}$ increases with the systems parameters including the Lipschitz parameters $L_f, L_c, L_{\pi^{\dagger}}$ and telescoping parameters $p$, a higher estimation of the Lipschitz parameters and telescoping parameters can cause a higher regret. Also, a lower estimation of the minimum cost value can cause a higher regret. Therefore, although knowing the upper bound of the Lipschitz parameters and telescoping parameters and the lower bound of the minimum cost value is enough to guarantee the anytime competitive cost constraints by Proposition \ref{thm:allowed_deviation_1}, a lower reward regret can be obtained with a more accurate estimation of these system parameters.

\subsection{Regret of \ouralg}
To quantify the regret defined in Eqn.~\eqref{eqn:regret}, it remains to bound the reward gap between the \ouralginf policy $\pi^{k}$ and the optimal \ouralginf policy $\pi^{\circ}$. In this section, we show that $\pi^{k}$ by \ouralg approaches the optimal one $\pi^{\circ}$  as episode $K\rightarrow\infty$ by bounding the pseudo regret 
\begin{equation}
\mathrm{PReg}(K)=\mathbb{E}_{x_1}\left[\sum_{k=1}^K \left(V_1^{\pi^\circ}(s_1)-V_1^{\pi^k}(s_1)\right)\right].
\end{equation}

\begin{theorem}\label{thm:regret}
Assume that the value function is bounded by $\bar{V}$. Denote a set of function as 
\begin{equation}\label{eqn:valuefunctionset}
\mathcal{Q}=\left\{q\mid \exists g\in\mathcal{G}, \forall (s,a,v)\in \mathcal{S}\times \mathcal{A} \times \mathcal{V}, q(s,a,v)=\mathbb{E}_{f\sim g} \left[v(s')\mid s,a\right]\right\}.
\end{equation}
If $\beta_k=2(\bar{V}H)^2\log\left(\frac{2\mathcal{N}(\mathcal{Q}, \alpha, \|\cdot\|_{\infty})}{\delta}\right)+C\bar{V}H$ with $\alpha=1/(KH\log(KH/\delta))$, C being a constant, and $\mathcal{N}(\mathcal{Q}, \alpha, \|\cdot\|_{\infty})$ being the covering number of $\mathcal{Q}$, with probability at least $1-\delta$, the pseudo regret of Algorithm \ref{alg:rl} is bounded as
\begin{equation}
\mathrm{PReg}(K)\leq 1+d_{\mathcal{Q}}H\bar{V}+4\sqrt{d_{\mathcal{Q}}\beta_K KH}+H\sqrt{2KH\log(1/\delta)},
\end{equation}
where $d_{\mathcal{Q}}=\mathrm{dim}_{E}(\mathcal{Q},\frac{1}{KH})$ is the Eluder dimension of $\mathcal{Q}$ defined in \cite{Eluder_dim_russo2014learning}.
\end{theorem}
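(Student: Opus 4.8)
The plan is to follow the optimism-in-the-face-of-uncertainty template for model-based RL, adapted to the augmented MDP $\tilde{\mathcal{M}}$ induced by \ouralginf. There are four ingredients: (i) a concentration bound showing the true transition density $g^*$ (the one generating the trajectories, assumed to lie in $\mathcal{G}$) is contained in every confidence set $\mathcal{G}_k$; (ii) optimism, i.e. the value returned by value iteration with the selected $g^k$ upper-bounds $V_1^{\pi^\circ}$; (iii) a value-difference (simulation) decomposition of the per-episode gap into on-trajectory one-step model-prediction errors plus a martingale term; and (iv) an Eluder-dimension pigeonhole bound on the cumulative prediction error.

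For (i), I would view the value iteration \eqref{eqn:value_iter} together with the least-squares fit \eqref{eqn:transition_fitting} as a nonlinear regression whose hypothesis class is $\mathcal{Q}$, with target samples $\tilde V_{h+1}^i(s_{h+1}^i)$. The delicate point is that each target is itself computed from the data of previous episodes; I would therefore discretize $\mathcal{Q}$ by an $\alpha$-net of size $\mathcal{N}(\mathcal{Q},\alpha,\|\cdot\|_\infty)$ with $\alpha=1/(KH\log(KH/\delta))$, apply a self-normalized Freedman/Azuma martingale bound to each net element, and union bound; the discretization error is absorbed by the choice of $\alpha$ and the additive $C\bar V H$ term in $\beta_k$. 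Here it is essential that, given $(s_h,a_h)$, the only randomness in the augmented successor $s_{h+1}$ comes from $f_h$ drawn from $g^*$ — the allowed-deviation coordinate $D_{h+1}$ and the history coordinates evolve deterministically through \eqref{eqn:deviation_update} and the projection $P_{\mathcal{A}_h(D_h)}$ — so the regression residuals form a genuine martingale-difference sequence whose range is controlled by $\bar V$. This yields that, with probability at least $1-\delta/2$, $g^*\in\mathcal{G}_k$ for all $k$ and, simultaneously, any two densities in $\mathcal{G}_k$ have cumulative squared prediction discrepancy at most $O(\beta_k)$ on the past query points.

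For (ii)--(iii), condition on the above event. Since $\pi^\circ(s_h)=P_{\mathcal{A}_h(D_h)}(\tilde\pi^*(s_h))$ is the \ouralginf policy of the optimal ML policy under the true model, and Algorithm \ref{alg:rl} picks $g^k=\arg\max_{g\in\mathcal{G}_k}\mathbb{E}_g[V_1(s_1^k)]$ with $g^*\in\mathcal{G}_k$, value iteration over ML policies gives $\tilde V_1^k(s_1^k)\ge V_1^{\pi^\circ}(s_1^k)$. Hence $V_1^{\pi^\circ}(s_1^k)-V_1^{\pi^k}(s_1^k)\le \tilde V_1^k(s_1^k)-V_1^{\pi^k}(s_1^k)$, and since both sides involve the same ML policy $\tilde\pi^k$ under the models $g^k$ and $g^*$ respectively, the value-difference lemma applied along the trajectory that $\pi^k$ generates under $g^*$ expresses this gap as $\sum_{h=1}^H\big(\mathbb{E}_{g^k}[\tilde V_{h+1}^k(s_{h+1})\mid s_h^k,a_h^k]-\mathbb{E}_{g^*}[\tilde V_{h+1}^k(s_{h+1})\mid s_h^k,a_h^k]\big)+\xi_k$, where $\{\xi_k\}_k$ is a martingale-difference sequence (realized next-step values minus their conditional means) with bounded increments, contributing $H\sqrt{2KH\log(1/\delta)}$ via Azuma over the $K$ episodes and consuming the remaining $\delta/2$ of failure probability. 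Each on-trajectory term is the confidence-set width at the new query point $(s_h^k,a_h^k,\tilde V_{h+1}^k)$ of a function from $\mathcal{Q}$, with squared-width budget $\beta_k$.

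For (iv), summing these widths over all $KH$ query points is exactly the situation handled by the standard pigeonhole argument in the Eluder-dimension literature \cite{Eluder_dim_russo2014learning,Eluder_dim_osband2014model}: with $d_{\mathcal{Q}}=\mathrm{dim}_{E}(\mathcal{Q},\tfrac{1}{KH})$, range $\bar V$, and budget $\beta_K$, the cumulative width is at most $1+d_{\mathcal{Q}}H\bar V+4\sqrt{d_{\mathcal{Q}}\beta_K KH}$ — the $d_{\mathcal{Q}}H\bar V$ term absorbs the $O(d_{\mathcal{Q}})$ "burn-in" queries on which the width may be as large as $\bar V$, and the leading term is Cauchy--Schwarz against the Eluder width-counting bound. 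Adding the martingale term $H\sqrt{2KH\log(1/\delta)}$ gives the claimed pseudo-regret bound. I expect the main obstacle to be step (i): obtaining the self-normalized concentration uniformly over $\mathcal{Q}$ with data-dependent regression targets, pinning down the exact form of $\beta_k$ from the covering number, and verifying that the augmented-state dynamics genuinely isolate all stochasticity into $g^*$ so that the regression-noise martingale structure — and hence the whole Eluder machinery — carries over unchanged from the plain model-based RL setting. The remaining steps are bookkeeping on top of well-established techniques.
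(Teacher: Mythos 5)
Your proposal follows essentially the same route as the paper's proof: optimism of the value-targeted confidence set (the paper invokes the concentration lemma of Ayoub et al.\ for $g\in\mathcal{G}_k$, which you sketch via a covering-net martingale argument), the same value-difference decomposition into on-trajectory model-prediction errors plus an Azuma-bounded martingale term, and the same Eluder-dimension pigeonhole bound from Russo and Van Roy yielding $1+d_{\mathcal{Q}}H\bar V+4\sqrt{d_{\mathcal{Q}}\beta_K KH}$. Your explicit observation that all stochasticity of the augmented state is isolated in $f_h\sim g$ (with $D_h$ and the history evolving deterministically) is exactly the point that lets the standard machinery carry over, and is consistent with, if more explicit than, the paper's treatment.
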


Theorem~\ref{thm:regret} bounds the pseudo regret for each episode $k$. 
The confidence parameter $\beta_k$ to balance the exploration and exploitation is chosen to get the pseudo regret bound as shown in Theorem~\ref{thm:regret}. A higher $\beta_k$ is chosen to encourage the exploration if the covering number of the function space $\mathcal{Q}$, the episode length, or the maximum value becomes larger. Also, the pseudo regret relies on the size of the function space $\mathcal{Q}$ through $d_{\mathcal{Q}}$ and $\beta_K$. With smaller $\lambda$ or $b$, less actions satisfy Corollary \ref{thm:deviation_update} given a state, and so a smaller state-action space  $\mathcal{S}\times \mathcal{A}$ is obtained, which results in a smaller size of the function space $\mathcal{Q}$ and thus a smaller regret.

To get more insights,
 we also present the overall regret bound when the transition model $g$ can be represented by a linear kernel as in \cite{Model_based_RL_ayoub2020model,RL_discounted_MDP_zhou2021provably}, i.e. $g(f)=\left< \phi(f),\theta\right>$ with dimension of $\theta$ as $d_{\theta}$, the reward regret in Eqn.\ref{eqn:regret} is bounded as
  \begin{equation}
\mathrm{Regret}(K) \!\leq\!  K\mathbb{E}_{y_{1:H}}\left\{\!\sum_{h=1}^HL_{Q,h}\!\left[\eta-\frac{1}{\Gamma_{h,h}}(\lambda \epsilon+b+\Delta G_h)\right]^+\!\!\right\}\!+\!\tilde{O}(\sqrt{H^3\bar{V}^2K\log(1/\delta)}),
\end{equation} 
where $L_{Q,h}$, $\eta$, $\Gamma_{h,h}$, and $\Delta G_h$ are all defined in Theorem \ref{thm:reward_bound}.
 The overall regret bound is obtained because under the assumption of linear transition kernel, we have $\beta_K=O\large((\bar{V}H)^2\log(\frac{1}{\delta}\mathcal{N}(\mathcal{Q}, \alpha, \|\cdot\|_{\infty}))\large)=\tilde{O}((\bar{V}H)^2(d_{\theta}+\log(1/\delta)))$ \cite{Model_based_RL_ayoub2020model}, and the Eluder dimension is $d_{\mathcal{Q}}=\tilde{O}(d_{\theta})$~\cite{Eluder_dim_russo2014learning}. Thus the pseudo regret is $\mathrm{PReg}(K)=\tilde{O}(\sqrt{H^3\bar{V}^2K\log(1/\delta)})$ which is sublinear in terms of $K$. With the sublinear pseudo regret $\mathrm{PReg}(K)$, the \ouralginf policy $\pi^k$ performs as asymptotically well as the optimal \ouralginf policy $\pi^\circ$ when $K\rightarrow \infty$. Combining with the regret of the optimal \ouralginf policy in Theorem \ref{thm:reward_bound}, we can bound the overall regret of \ouralg. 
  Since in the definition of regret, \ouralginf policy is compared with the optimal-unconstrained policy $\pi^{*}$,
the regret bound also includes an unavoidable linear term due to the commitment to satisfy the anytime competitive constraints. The linear term indicates the trade-off between the reward optimization and the anytime competitive constraint satisfaction.

\section{Empirical Results}
We experiment with the application of resource management for carbon-aware computing \cite{carbon_aware_computing_radovanovic2022carbon} to empirically show the benefits of \ouralg. The aim of the problem is to jointly optimize carbon efficiency and revenue while guaranteeing the constraints on the quality-of-service
(QoS). In this problem, there exists a policy prior $\pi^\dagger$ which directly optimizes QoS based on estimated models. In our experiment, we apply \ouralg to optimize the expected reward and guarantee that the real QoS cost is no worse than that of the policy prior. The concrete settings can be found in Appendix  \ref{empirical_results}. 
\begin{figure}[!t] 
\centering
\subfigure[Regret per episode]{
	\includegraphics[width=0.31\textwidth]{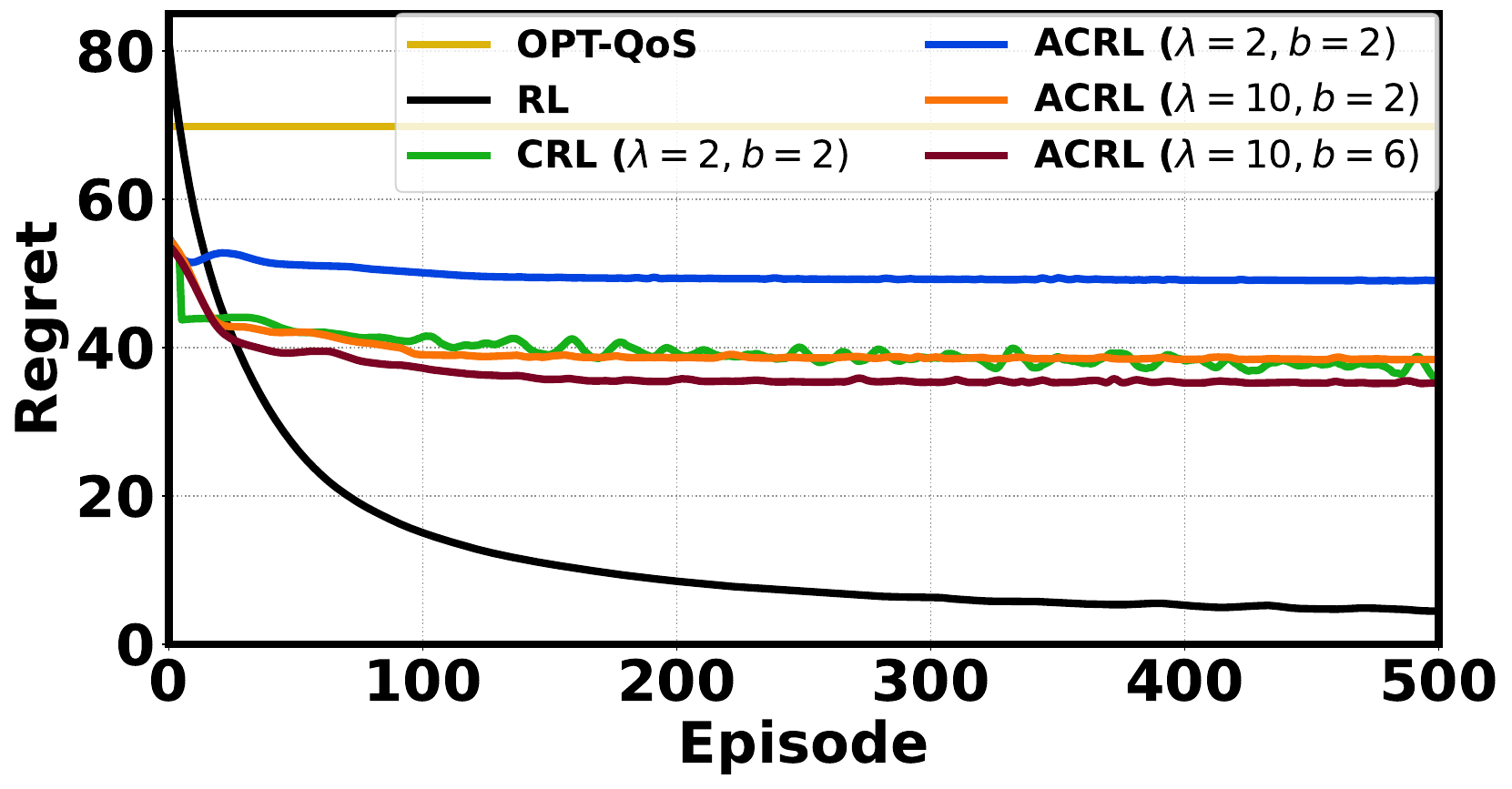}\label{fig:regret_episode_1}
	}
\subfigure[Regret w.r.t. $\lambda$]{
	\includegraphics[width=0.31\textwidth]{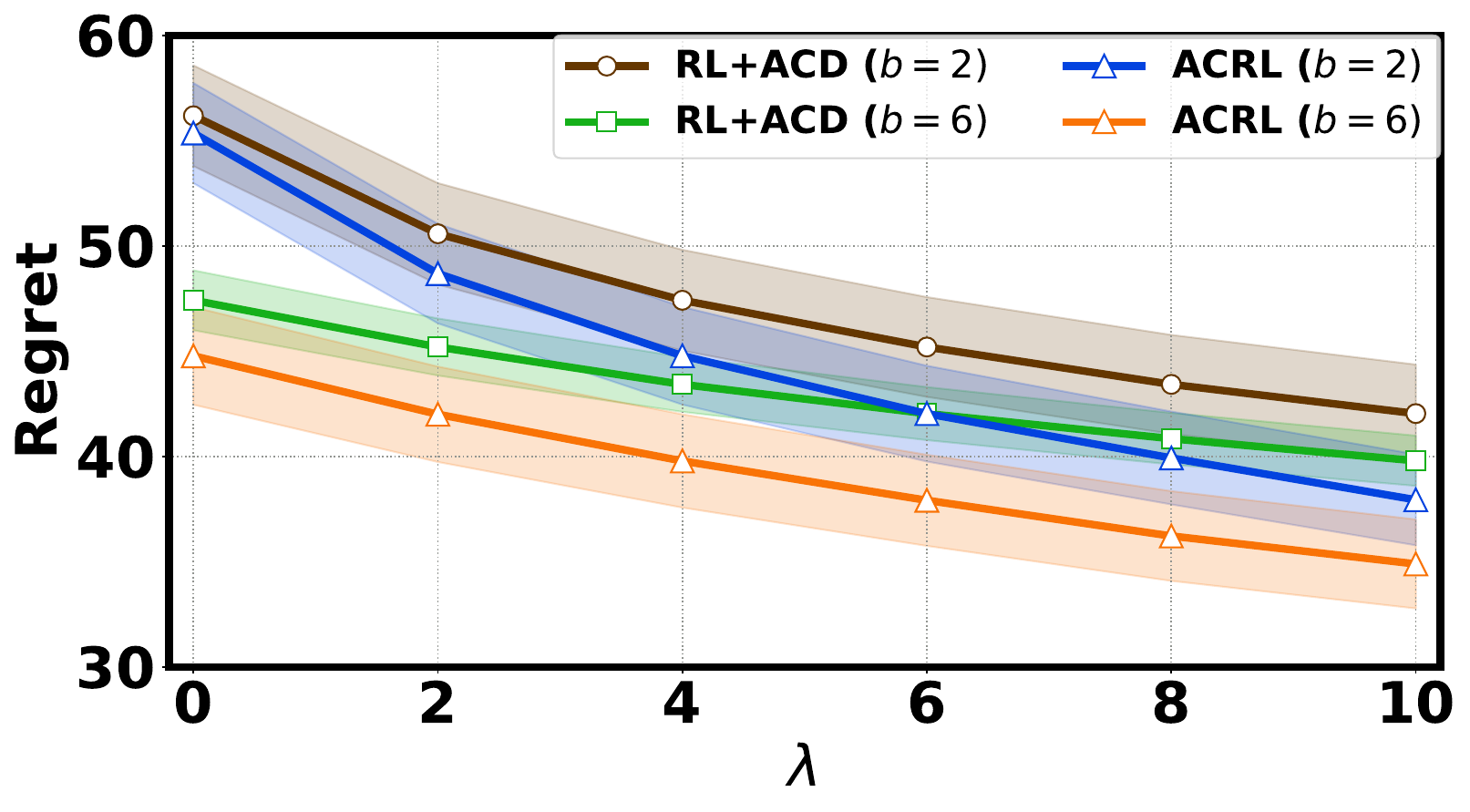}\label{fig:regret_lambda_1}
	}
\subfigure[Violation rate w.r.t. $\lambda$]{
	\includegraphics[width=0.31\textwidth]{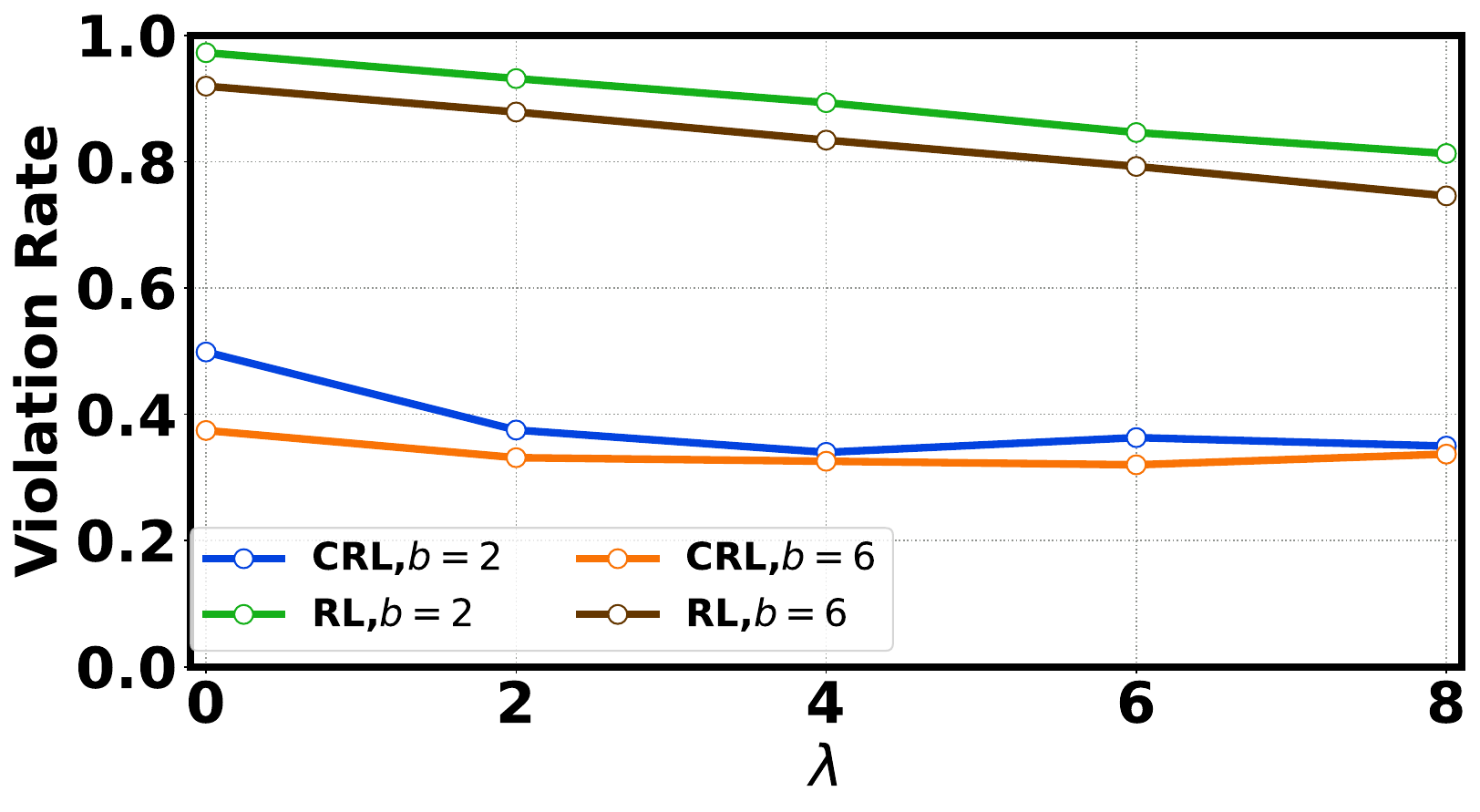}\label{fig:violation_rate_1}
	}
 \vspace{-1em}
\caption{Regret and cost violation rate of different algorithms. Shadows in Figure~\ref{fig:regret_lambda} show the range of the regret. }
\vspace{-2em}
\label{fig:regret}
\end{figure}

Figure~\ref{fig:regret_episode_1} gives the regret changing the in first 500 episodes. 
Figure~\ref{fig:regret_lambda_1} shows the regret with different $\lambda$ and $b$, demonstrating the trade-off between reward optimization and the satisfaction of anytime competitive constraints.  Figure~\ref{fig:violation_rate_1} shows the probability of the violation of the anytime competitive constraints by RL and constrained RL. \ouralg and ML models with \ouralginf have no violation of anytime competitive constraints.
More analysis about the results are provided in Appendix  \ref{empirical_results} due to space limitations.

\section{Concluding Remarks}
This paper considers a novel MDP setting called \problem where the goal is to optimize the average reward while guaranteeing the anytime competitive constraints which require the cost of a learned policy never exceed that of a policy prior $\pi^{\dagger}$ for any round $h$ in any episode. To guarantee the anytime competitive constraints, we design \ouralginf, which projects the output of an ML policy into a safe action set at each round. Then, we formulate the decision process of \ouralginf as a new MDP and propose a model-based RL algorithm \ouralg to optimize the average reward under the anytime competitive constraints. Our performance analysis shows the tradeoff between the reward optimization and the satisfaction of the anytime competitive constraints. 

\textbf{Future directions.} Our results are based on the assumptions on the Lipschitz continuity of the cost, dynamic functions, and the policy prior, as well as the telescoping properties of the policy prior, which are also supposed or verified in other literature \cite{Lipschitz_model_based_RL_asadi2018lipschitz, Lips_MDP_gelada2019deepmdp,contraction_tsukamoto2021contraction,li2023online}. 
In addition, to guarantee the anytime competitive constraints, the agent is assumed to have access to the Lipschitz constants, the minimum cost value, and the perturbation function.
However, since the anytime competitive constraints are much stricter than the expected constraints or the constraints with a high probability, there is no way to guarantee them without any knowledge of the key properties of a mission-critical system. Our work presents the first policy design to solve \problem, but it would be interesting to design anytime-competitive policies with milder assumptions in the future.

\section*{Acknowledgement}
We would like to thank the anonymous reviewers for their helpful comments. Jianyi Yang, Pengfei Li and Shaolei Ren were supported in part by the U.S. NSF under the grant CNS--1910208.  Tongxin Li was partially supported by the NSFC grant No. 72301234, the Guangdong Key Lab of Mathematical Foundations for Artificial Intelligence, and the start-up funding UDF01002773 of CUHK-Shenzhen. Adam Wierman was supported in part by the U.S. NSF under grants CNS--2146814, CPS--2136197, CNS--2106403, NGSDI--2105648.
%%%%%%%%%%%%%%%%%%%%%%%%%%%%%%%%%%%%%%%%%%%%%%%%%%%%%%%%%%%

\newpage
\appendix
\onecolumn
\section*{Anytime-Competitive Reinforcement Learning with  Policy Prior --  Supplementary Material}

We provide the empirical results and proofs of our theorems in the appendix.
\section{Empirical Results - Carbon-Aware Resource Management}\label{empirical_results}
\subsection{Problem Formulation}
We consider the sustainable workload scheduling problem in datacenters to jointly optimize the carbon efficiency and the revenue while guaranteeing the quality-of-service
(QoS). In this problem, the average carbon efficiency and revenue can be optimized while QoS must always be ensured at each step. The agent needs to decide the computing resource $a_h$ measured by energy $(kWh)$ for each round $h$. The state $x_h$ is the remaining demand for each round $h$ and is updated as
\begin{equation}
x_h=f(x_{h-1},\mu_h,a_h)=[V_x(x_{h-1})+\mu_h-V_a(a_h)]^+,
\end{equation}
where $V_x$ is a random function of $x_{h-1}$ measuring the randomly decayed remaining demands (e.g., due to workload dropping), $\mu_h$ is  the arrival demand at round $h$,  and $V_a$ is a  random function in terms of $a_h$ and outputs the amount of 
processed workload. With the random functions $V_x$ and $V_a$, the remaining workload $x_h$ at round $h$ is drawn from $\mathbb{P}(x_{h}\mid x_{h-1},\mu_h,a_h)$. Here, we focus on flexibly deferrable workloads (e.g., model training
and batch data processing) \cite{carbon_aware_computing_radovanovic2022carbon}.

The energy efficiency reward is modeled by a penalty for the carbon footprint at each round. Let $C_h$ be the amount of renewable at round $h$, the energy efficiency reward is expressed as $\mathrm{efficiency}_h=-([a_h-C_h]^+)^2$. The revenue function is modeled as a general power-law function \cite{shahrad2017incentivizing} as $\mathrm{revenue}_h=C_rV_a^{\alpha}(a_h)$ with $\alpha\in(0,1)$. In datacenters, we also need to consider a switching cost $\gamma_2\|a_h-a_{h-1}\|^2$ at each round $h$ to avoid switching on/off servers frequently. Thus, the reward in this problem is formulated as  
\begin{equation}
\mathrm{reward}_h=\mathrm{efficiency}_h+\gamma_1\cdot \mathrm{revenue}_h- \gamma_2\cdot \|a_h-a_{h-1}\|^2.
\end{equation}

Besides the reward, QoS is also crucial for deferrable workloads in datacenters. In this work, we model QoS as a cost function of the remaining demand as follows:
\begin{equation}\label{eqn:QoS_cost}
cost_{\mathrm{QoS},h}=x_h^{\top}Q_1x_h+Q_2^\top x_h+Q_3,
\end{equation}
where $Q_1$, $Q_2$ and $Q_3$ are constants.

As shown in Eqn.~\eqref{eqn:target_app}, given a baseline $\pi^{\dagger}$ that has been verified to achieve a satisfactory QoS, our goal is to optimize the expected reward and guarantee the QoS for any time in any sequence, i.e.
\begin{equation}\label{eqn:target_app}
\begin{split}
&\max_{\pi\in\Pi}\quad \mathbb{E}\left[\sum_{h=1}^H \mathrm{reward}_h \right],\\
{s.t.}&\quad \sum_{h=1}^{h'} cost_{\mathrm{QoS},h}(\pi) \leq (1+\lambda)\sum_{h=1}^{h'} cost_{\mathrm{QoS},h}(\pi^{\dagger})+h'b,\quad  \forall h'\in[H],
\end{split}
\end{equation} 
which is consistent with the definition of anytime competitive constraints in Definition \ref{def:anaytime_const}.

\subsection{Baselines}
In the experiments, we consider different baselines as below. 

$\bullet$ QoS Optimization ({\expert}): This baseline policy prior  directly optimizes QoS in \eqref{eqn:QoS_cost} based on estimated models $\hat{V}_x$ and $\hat{V}_a$. Without taking efficiency or revenue into consideration, \expert essentially always schedules as many  computing resources  as possible to lower the QoS cost based on the estimated arrival demand.

$\bullet$ Reinforcement Learning ({\rl}): This is a model-based reinforcement learning algorithm to optimize the expected reward $\mathbb{E}\left[\sum_{h=1}^H \mathrm{reward}_h \right]$ without considering any QoS constraints. 

$\bullet$ Constrained Reinforcement Learning ({\crl}): This is a constrained reinforcement learning to optimize the reward with the expected QoS cost constraint as shown below:
\begin{equation}\label{eqn:CMDP}
\begin{split}
\max_{\pi\in\Pi}\quad \mathbb{E}\left[\sum_{h=1}^H \mathrm{reward}_h \right], \;\;{s.t.}\quad \mathbb{E}\left[\sum_{h=1}^H cost_{\mathrm{QoS},h}(\pi)-(1+\lambda)\sum_{h=1}^H cost_{\mathrm{QoS},h}(\pi^{\dagger})\right] \leq B. 
\end{split}
\end{equation}

$\bullet$ Random RL policy with \ouralginf ({\initrl+\ouralginf}): This algorithm selects actions by $\ouralginf$ in Algorithm \ref{alg:online_inference} with a random ML model $\tilde{\pi}$ as the input of $\ouralginf$.

$\bullet$ Trained RL policy with \ouralginf ({\rl+\ouralginf}): This algorithm selects actions by $\ouralginf$ in Algorithm \ref{alg:online_inference} with the RL policy trained to optimize the expected reward without accounting
for QoS. 

$\bullet$ Anytime-Competitive Reinforcement Learning ({\ouralg}): This is the proposed Algorithm \ref{alg:rl} which optimizes the expected reward while guaranteeing the anytime competitive QoS cost constraints in \eqref{eqn:target_app}. In each inference, \ouralginf in Algorithm \ref{alg:online_inference} is used to select actions.

\subsection{Experiment Settings}
In the experiments, we evaluate the performances with the following experiment settings.

\textbf{System parameters.}
We evaluate the regret and the cost constraints for different choices of parameters. The results are given for different anytime competitive constraint parameters including $\lambda$ chosen from $[0,10]$ and $b$ chosen from $\{2,6\}$. With smaller $\lambda$ and $b$, we have more stringent constraints, and vice versa. In the experiments, we choose $\alpha=0.5$   for the revenue function to simulate a typical effect of the scheduled resource on the revenue. To scale different rewards into the same magnitude, we choose the weight for the revenue as $\gamma_1=4$, and the weight for the switching cost as $\gamma_2=1$. For the QoS cost function, we choose $Q_1=Q_2=Q_3=1$, so we have the minimum QoS cost as $\epsilon=Q_3=1$. The transition model $f$ are from a function space defined by random functions $V_x$ and $V_a$. To create the environment for RL, $V_x(x_h)$ is drawn from a uniform distribution with range $[0.9\cdot x_{h},x_h]$, and $V_a(a_h)$ is drawn from a normal distribution with $0.8\cdot a_h$ as the center.

\textbf{Data.} For experiments, we create an environment based on a renewable dataset and a demand dataset. The renewable dataset is a public dataset from California Independent System Operator \cite{caiso} which contains the hourly renewable generation in 2019. The renewable sequences from multiple sources (solar, wind, water ) are summed together and scaled to be the values of $\{C_h\}_{h=1}^H$ in the problem formulation.
In addition, we use the Azure Cloud Dataset \cite{cortez2017resource} as the demand dataset which includes hourly CPU utilization in the same year of 2019. 
We choose the sequences of the first three months and augment them to 4000 episodes for policy exploration, and we hold out the sequences of the last two months for testing. 

\textbf{Learning settings.}
To ensure fair comparisons, we choose the same neural network architecture as the policy network for different methods.  The policy neural network has two hidden layers and each hidden layer has 40 neurons. For training, the policy network parameters are initialized by Gaussian distribution. The reinforcement learning has total $K=4000$ episodes. We update the neural network every 50 episodes with a weight update rate of $10^{-3}$. We apply Adam optimizer to update the weights of neural networks. 
\begin{figure}[!t]
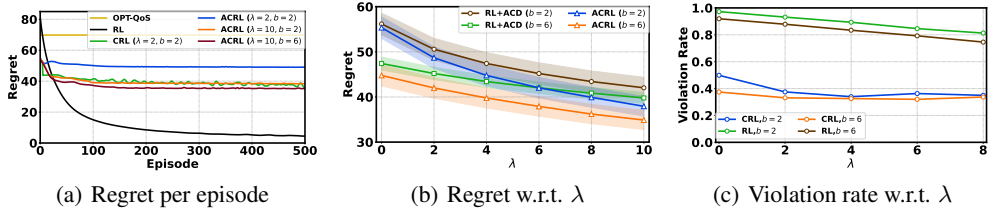
 
\centering
\subfigure[Regret per episode]{
	\includegraphics[width=0.3\textwidth]{figures/regret.pdf}\label{fig:regret_episode}
	}
\subfigure[Regret w.r.t. $\lambda$]{
	\includegraphics[width=0.3\textwidth]{figures/regret_lambda.pdf}\label{fig:regret_lambda}
	}
\subfigure[Violation rate w.r.t. $\lambda$]{
	\includegraphics[width=0.3\textwidth]{figures/violation_rate.pdf}\label{fig:violation_rate}
	}
\caption{Regret and cost violation rate of different algorithms. Figure~\ref{fig:regret_episode} gives the regret changing with episodes. Figure~\ref{fig:regret_lambda} shows the regret with different $\lambda$ and $b$ after exploration for all the $4000$ episodes.  Shadows in Figure~\ref{fig:regret_lambda} show the range of regret. Figure~\ref{fig:violation_rate} shows the probability of the violation of the anytime competitive constraints. Figure~\ref{fig:violation_rate} shows the probability of the violation of the anytime competitive constraints.}
\label{fig:regret_sup}
\end{figure}

\subsection{Results}
We show the empirical results for both regret and QoS cost and discuss the insights from these results.

\textbf{Regret evaluation.}
The reward regrets as defined in Eqn.~\eqref{eqn:regret} are given in Figure~\ref{fig:regret_sup}. To evaluate the regret, we use the \rl policy after the exploration for total $4000$ episodes as the optimal RL policy $\pi^*$. The results are given for different anytime competitiveness parameters $\lambda$ and $b$.

Figure~\ref{fig:regret_episode} shows the varying regret of different algorithms  for the first 500 episodes. Without including reward as an objective, the policy prior \expert is an algorithm that is not updated over time and always gives the highest regret. Without the QoS cost constraints, \rl approaches the optimal RL policy that can give the best regret as time goes on. The constrained RL (\crl) and anytime-competitive RL (\ouralg) are guaranteed to satisfy the expected constraint in \eqref{eqn:CMDP} and the anytime competitive constraints in \eqref{eqn:target_app}, respectively, so their reward regrets are higher than \rl. Also, we can find that with larger $\lambda$ and/or $b$, \ouralg can achieve lower regret after about 200 episodes. This is because the larger $\lambda$ or $b$ gives less stringent anytime competitive constraints and leaves more flexibility to reduce the regret as shown in Theorem \ref{thm:reward_bound}. Moreover, we can observe that \ouralg with smaller $\lambda$ and $b$ (e.g. $\lambda=2, b=2$ in the figure) converges faster to the optimal policy under the anytime competitive constraints. The reason is that the constraints with smaller $\lambda$ and $b$ provide a smaller policy space to explore, resulting in a smaller Eluder dimension $d_{\mathcal{Q}}$ shown in Theorem \ref{thm:regret}.

\begin{figure}[!t] 
\centering
\subfigure[Worst-case cost]{
	\includegraphics[width=0.48\textwidth]{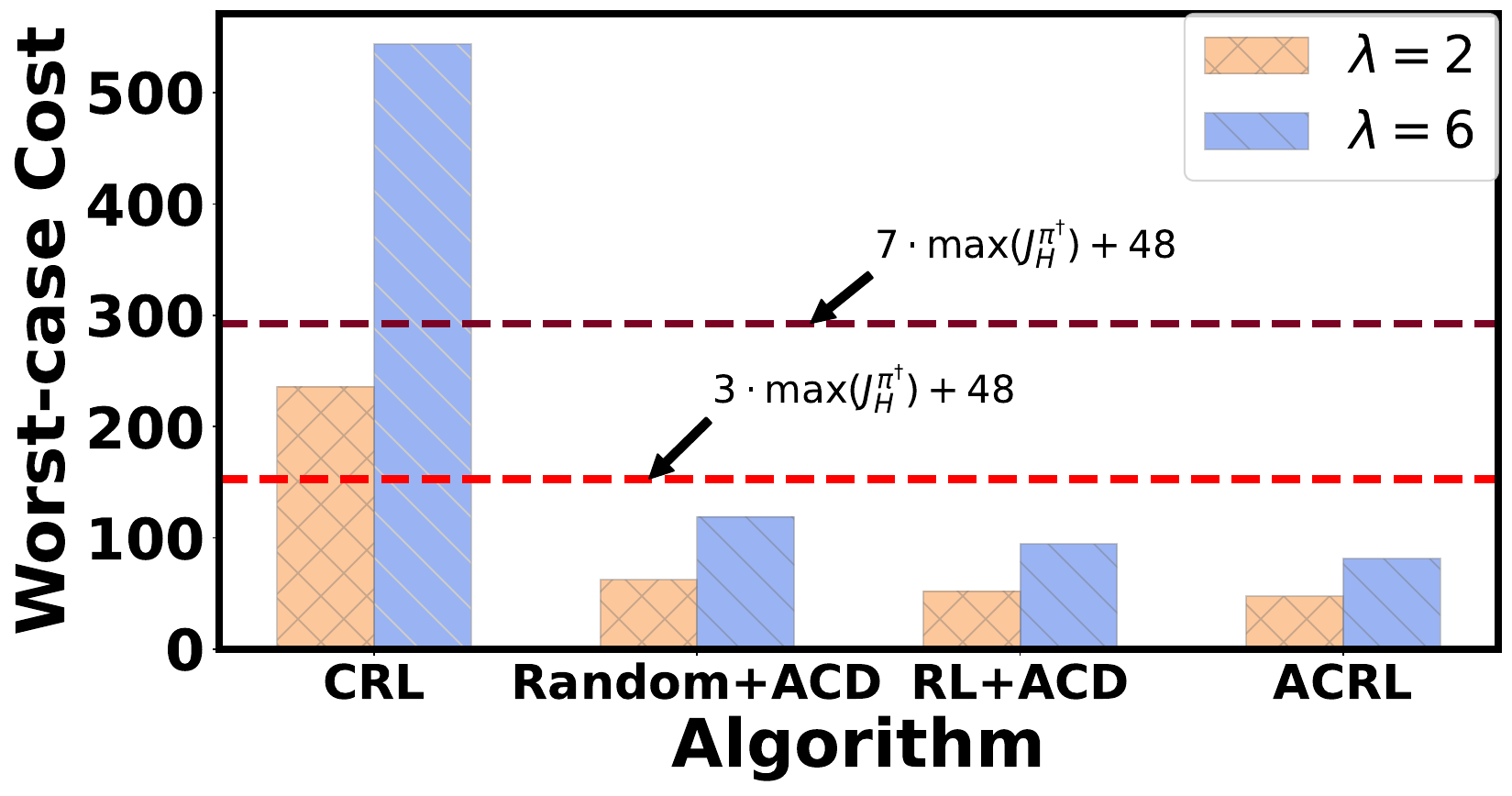}\label{fig:cost_worst}
	}
\subfigure[Average cost]{
	\includegraphics[width=0.48\textwidth]{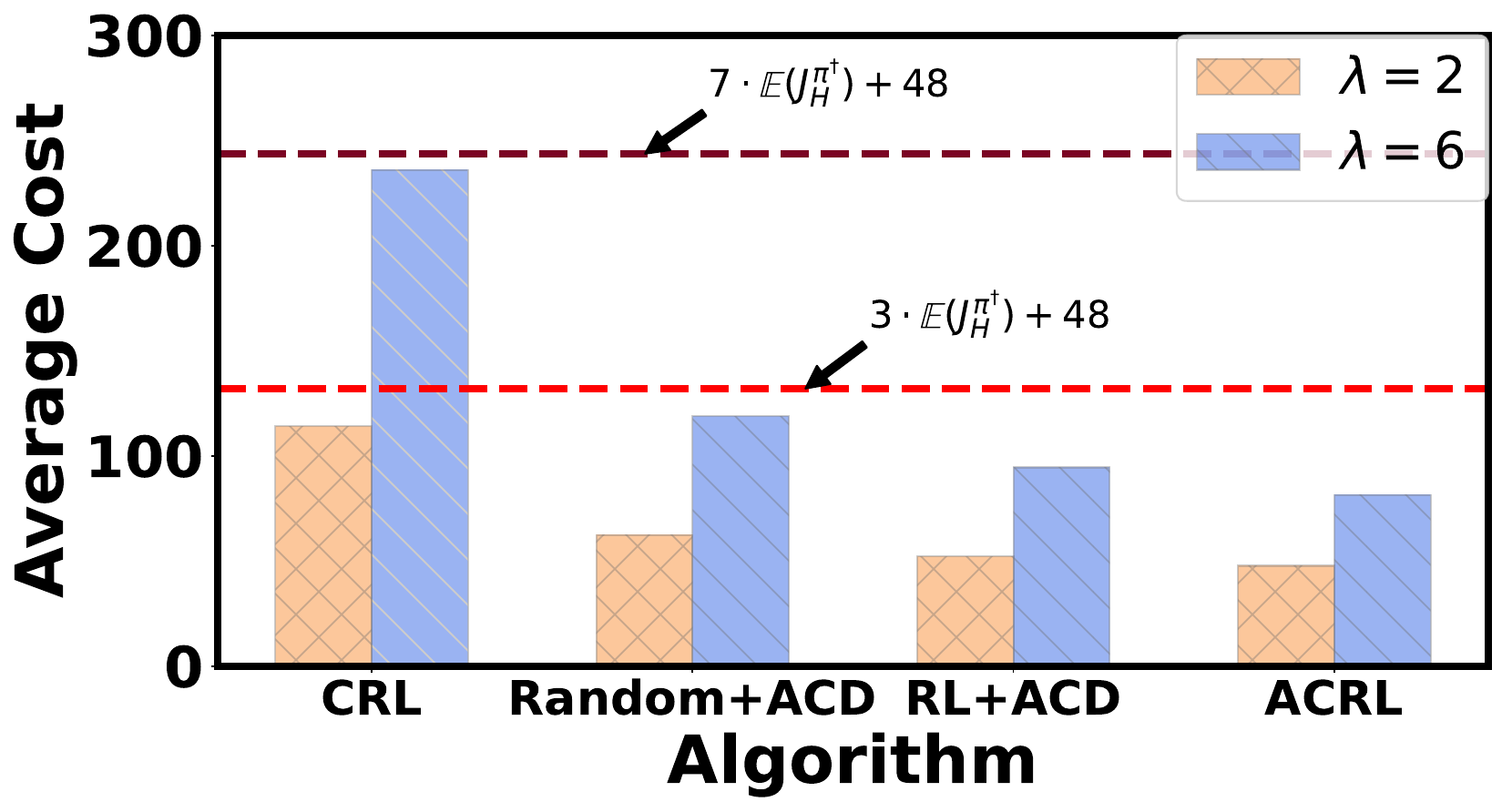}\label{fig:cost_avg}
	}
\caption{QoS costs of different algorithms. Figure \ref{fig:cost_worst} and Figure \ref{fig:cost_avg} give the worst-case costs and the average cost for different algorithms under $b=2$, $\lambda=2$ and $\lambda=6$, respectively. For \expert $\pi^{\dagger}$, the worst-case cost $\max (J_H^{\pi^{\dagger}})$ and average cost $\mathbb{E}(J_H^{\pi^{\dagger}})$ are 34.95 and  27.92, respectively.}
\label{fig:cost}
\end{figure}

In Figure~\ref{fig:regret_lambda}, we give the optimal regret of the algorithms after enough exploration. \ouralg and \rl+\ouralginf are different in terms of the RL policy $\tilde{\pi}$ used in Algorithm \ref{alg:online_inference}. \initrl+\ouralginf uses randomly initialized RL policy as $\tilde{\pi}$ and has a regret as large as from $61$ to $64.51$, exceeding the limit of the regret axis. The optimal regrets of different algorithms decrease with the parameters $\lambda$ and $b$, which is consistent with the findings in Theorem \ref{thm:reward_bound}. Importantly, we can find that under the same $\lambda$ and $b$, \ouralg can always improve the regret of \rl+\ouralginf. This is because \rl explores the original environment defined in Section \ref{sec:A-CMDP} while \ouralg explores the environment of the new MDP defined at the beginning of Section \ref{sec:anytime-constrained_RL}, which is the true environment created by \ouralginf.
This highlights the advantage of \ouralg in terms of reducing the reward regret
by learning with the awareness of the anytime constraints in the true environment.

\textbf{Cost evaluation.}
In Figure \ref{fig:violation_rate}, we show the violation probability of the anytime competitive constraints for RL and CRL. Thanks to the theoretical guarantee of the anytime competitive constraints, the algorithms based on \ouralginf (\ouralg, RL+\ouralginf) all have zero violation probability, so they are not shown in the figure. Since RL only optimizes the expected reward, the probability of cost constraint violation becomes higher when the competitive constraint becomes more stringent (smaller $\lambda$ or $b$). Although CRL considers the expected cost constraint and achieves a lower violation rate, the violation rate is still not zero since there is no theoretical guarantee of the anytime competitiveness. The violation rate of anytime competitive constraints is not allowed for mission-critical applications, showing the need of an RL algorithm with anytime competitiveness guarantee like \ouralg.

We evaluate the QoS costs in Figure~\ref{fig:cost} to verify that the constraints are satisfied. 

Figure \ref{fig:cost_worst} gives the worst-case cost of different algorithms for all the sequences in the testing dataset. The \rl algorithm without considering the cost objective achieves the worst-case cost of $5015.62$, exceeding the range of the cost axis to a large extent. The dotted horizon lines show the maximum cost bound required by the anytime competitive constraint in Definition \ref{def:anaytime_const}. Clearly, we can find that given any RL policy (even a randomly initialized RL policy) as an input, \ouralginf can guarantee the anytime competitive constraints even for the worst-case, but \crl fails to guarantee the anytime competitive constraints. In the experiments, we verify that all \ouralginf algorithms have zero violation of anytime competitive constraints no matter what ML policy is used, but \crl has a violation rate of $27.5\%$ for $\lambda=2$ and a violation rate of $58.6\%$ for $\lambda =6$.

Figure \ref{fig:cost_avg} shows the average cost of different algorithms for all the sequences in the testing dataset. The \rl algorithm achieves an average QoS cost of $2070.14$, exceeding the range of the cost axis to a large extent. The dotted lines give the maximum average QoS cost bound required by the expected constraint in \eqref{eqn:CMDP}. Since \crl is designed to optimize the regret subject to the average QoS constraints in \eqref{eqn:CMDP}, it has no violation in terms of the average QoS cost. The  algorithms that use \ouralginf guarantees the stricter anytime competitive constraints than the average constraint, so they can also guarantee the average QoS constraint.

\section{Empirical Results - Sustainable AI Inference}
\subsection{Problem Formulation}
AI tasks are widely deployed on edge datacenters. The renewables are utilized in edge datacenters to reduce the carbon emissions from AI inference. 
The renewable sources are known for their time-varying and unstable nature. Multiple AI models are often available for a given AI inference service \cite{ML_GPT3_Energy_Others_NIPS_2020_NEURIPS2020_1457c0d6}.  This provides a flexible balance between accuracy and energy consumption.  Thus, the agent needs to decide the model size at each round to optimize the inference performance with a constrained amount of carbon emission. 
\cite{carbon_aware_computing_radovanovic2022carbon,green_AI_schwartz2020green}. 

Specifically, the edge datacenter utilizes a battery to store the renewables and unused energy. The state $x_h$ is the battery state of charge (SoC) at each round $h$. Given an action $a_h$ (the energy consumed by selected models) and renewable $e_h$ at round $h$, the battery SoC is updated as
\begin{equation}
x_h=f(x_{h-1},e_h,a_h)=[x_{h-1}+V_e(e_h)-V_a(a_h)]^+,
\end{equation}
where $V_e$ and $V_a$ are random functions with the randomness coming from the charging and discharging rates of different batteries under different temperatures \cite{charging_SoC_song2019improved}. Since recycled batteries may be used in edge datacenters for sustainability, the charging and discharging functions $V_e(\cdot)$ and $V_a(\cdot)$ are generally unknown given a new problem instance. If at some round $h$, the consumption is larger than the sum of the renewable replenishment and remaining battery energy, i.e. $x_{h-1}+V_e(e_h)<V_a(a_h)$, the fossil energy is used and a cost that penalizes the carbon emission is formulated as 
\begin{equation}
cost_{\mathrm{carbon},h}=Q_1 \left\|[V_a(a_h)-x_{h-1}-V_e(e_h)]^+\right\|^2,
\end{equation}
where $Q_1$ is a  constant.

The reward includes the demand satisfaction revenue and inference performance. With more energy, more demand is satisfied, so we directly penalize the demand that is not served at each round as $\mathrm{reward}_{d,h}=-C_d\cdot ([\mu_h-a_h]^+)^2$ given an AI inference workload $\mu_h$. The inference performance is dependent on the action and is modeled by the log utility to capture the diminishing return, i.e. $\mathrm{reward}_{i,h}= \log(1+C_i*a_h)$. The total reward for round $h$  is represented as 
\begin{equation}
\mathrm{reward}_h= \mathrm{reward}_{d,h}+\gamma_2\cdot\mathrm{reward}_{i,h}-\gamma_1\cdot \|a_h-a_{h-1}\|^2-\gamma_3 \cdot cost_{\mathrm{carbon},h}.
\end{equation}

Same as the previous problems, given a policy prior $\pi^{\dagger}$ that balances the minimization of carbon emission and the demand satisfaction,  the goal is 
\begin{equation}\label{eqn:sustanable_target}
\begin{split}
&\max_{\pi\in\Pi}\quad \mathbb{E}\left[\sum_{h=1}^H \mathrm{reward}_h \right],\\
{s.t.}&\quad \sum_{h=1}^{h'} cost_{\mathrm{carbon},h}(\pi) \leq (1+\lambda)\sum_{h=1}^{h'} cost_{\mathrm{carbon},h}(\pi^{\dagger})+h'b,\quad  \forall h'\in[H],
\end{split}
\end{equation} 

\subsection{Settings and Baselines}
In the experiments, we consider different baselines as below. 

$\bullet$ Carbon Bound ({\susexpert}): This baseline policy prior makes decisions as below. When the estimated SoC $x_{h-1}+\hat{V}_e(x_{h-1},e_h)+Q_c$ based on an estimated charging function $\hat{V}_e$ and a slackness $Q_c$ is equal to or higher than the demand $\mu_h$. The action is set as $\mu_h$ to meet the demand.  Otherwise, the action is selected to bound the estimated carbon by a positive value $Q_c$ based on an estimated discharging function, i.e. solving $Q_1 \left\|[\hat{V}_a(a_h)-x_{h-1}-\hat{V}_e(e_h)]^+\right\|^2= Q_c$. 

$\bullet$ Reinforcement Learning ({\rl}): This is a model-based reinforcement learning algorithm to optimize the expected reward $\mathbb{E}\left[\sum_{h=1}^H \mathrm{reward}_h \right]$ without considering any cost constraints. 

$\bullet$ Constrained Reinforcement Learning ({\crl}): This is a constrained reinforcement learning to optimize the reward with the expected carbon cost constraint as shown below:
\begin{equation}\label{eqn:CMDP_sus}
\begin{split}
\max_{\pi\in\Pi}\quad \mathbb{E}\left[\sum_{h=1}^H \mathrm{reward}_h \right], \;\;{s.t.}\quad \mathbb{E}\left[\sum_{h=1}^H cost_{\mathrm{carbon},h}(\pi)-(1+\lambda)\sum_{h=1}^H cost_{\mathrm{carbon},h}(\pi^{\dagger})\right] \leq B. 
\end{split}
\end{equation}

$\bullet$ Anytime-Competitive Reinforcement Learning ({\ouralg}): This is the proposed Algorithm \ref{alg:rl} which optimizes the expected reward while guaranteeing the anytime competitive carbon cost constraints in \eqref{eqn:sustanable_target}. In each inference, \ouralginf in Algorithm \ref{alg:online_inference} is used to select actions.

We evaluate the methods in the following experiment environments.\\
\textbf{System parameters.}
We evaluate the performance for different choices of parameters. The results are given for different anytime competitive constraint parameters including $\lambda$ chosen from $\{5,7\}$ and $b$ chosen as 6 which controls how stringent the cost constraints are. In the experiments, we convert demand satisfaction reward, the inference performance, the carbon costs and the switching costs into monetary values through parameters $\gamma_1=0.5$, $\gamma_2=2$, and $\gamma_3=0.1$.  We choose $Q_1=1$ in the carbon cost function.  To create the environment for RL, $V_a(a_h)$ is drawn from a uniform distribution with range $[0.7\cdot a_{h},a_h]$, and $V_e(e_h)$ is drawn from a uniform distribution with range $[0.8\cdot e_{h},e_h]$.

\textbf{Data.} The inference demand $\mu_t$ comes from the GPU power usage of a large language model \cite{luccioni2022estimating}.  We still use California Independent System Operator \cite{caiso} dataset to simulate the renewable replenishment for edge data center.  
We choose the sequences of the first three months and augment them to 2160 episodes for policy training, and we hold out 1440 sequences for testing. 

\begin{wrapfigure}{r}{0.5\textwidth}
  \begin{center}
    \includegraphics[width=0.4\textwidth]{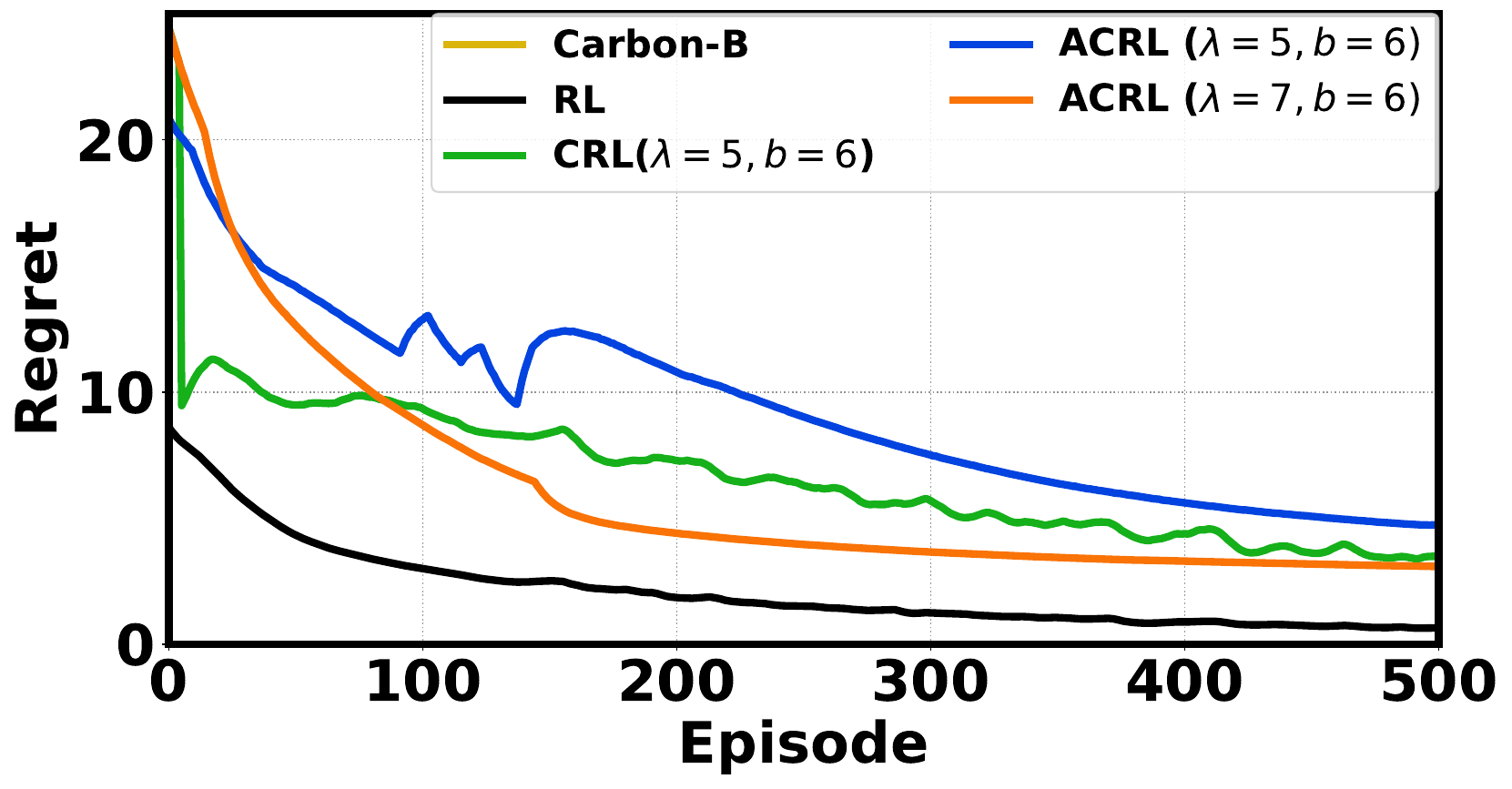}
  \end{center}
  \caption{Regrets of different algorithms. The regret of \susexpert is 51.417 and is out of the range of y axis.}\label{fig:regret_sustain}
  \vspace{-3em}
\end{wrapfigure}
\textbf{Learning settings.}
The neural network architecture is designed as below.  The policy neural network has two hidden layers and each hidden layer has 50 neurons. For training, the policy network parameters are initialized by Gaussian distribution. The reinforcement learning has total $K=2160$ episodes. We update the neural network every 50 episodes with a weight update rate of $10^{-4}$. We apply Adam optimizer to update the weights of neural networks.

\subsection{Regret Evaluation}
We give the dynamic reward regret for different algorithms within first 500 episodes in Figure \ref{fig:regret_sustain}. We can find that the regrets of all algorithms decrease as time goes.  RL, which directly optimizes the optimal reward without accounting the cost constraint satisfaction, can achieve the lowest reward regret. Both CRL and \ouralg consider the competitive cost constraints and achieve a larger regret than RL as is indicated by Theorem \ref{thm:regret}. Among them, CRL guarantees the competitive cost constraints in expectation, so it achieves a lower regret than \ouralg which guarantees the anytime competitive cost constraint with the same parameters $\lambda=5$ and $b=6$. However, \ouralg can theoretically guarantee the anytime competitive constraint, which makes it more suitable for the sustainable AI inference where strict requirements for carbon emission exist. Moreover, when $\lambda$ increases to $7$, we can find that the regret of \ouralg is reduced since the anytime competitive constraint is less stringent and there is more flexibility to optimize the expected reward, which demonstrates the trade-off between reward optimization and competitiveness satisfaction given by Theorem \ref{thm:reward_bound}.

\section{Proof of Theorems in Section \ref{sec:method}}
\subsection{Proof of Proposition \ref{thm:allowed_deviation_1}}
\textbf{Proposition \ref{thm:allowed_deviation_1}}
\textit{Suppose that Assumption \ref{asp:Lips} and \ref{asp:telescoping} are satisfied. At round $h$ with costs $\{c_i\}_{i=1}^{h-1}$ observed, the anytime competitive constraints $J_{h'}^{\pi}
\leq (1+\lambda)J_{h'}^{\pi^{\dagger}}+h'b$ for rounds $h'=h,\cdots, H$ are satisfied if for all subsequent rounds $h'=h,\cdots, H$,
\[
\begin{split}
&\sum_{j=h}^{h'}\Gamma_{j,j}\|a_j-\pi^{\dagger}(x_{j})\|\leq G_{h,h'}, \:\forall h'=h,\cdots, H,
\end{split}
\]
where $\Gamma_{j,n}=\sum_{i=n}^{H}q_{j,i}, (j\in [H],\forall n\geq j)$,  with $q_{j,i} = L_c\mathds{1}(j=i)+L_c(1+L_{\pi^{\dagger}})L_{f}p(i-1-j)\mathds{1}(j<i) , (\forall j\in[H], i\geq j),$ relying on known parameters, and $G_{h,h'}$ is called the allowed deviation which is expressed as 
\[
G_{h,h'}=\sum_{i=1}^{h-1}\left((1+\lambda)\hat{c}_i^{\dagger}-c_i-\Gamma_{i,h}d_i\right)+(h'-h+1)(\lambda\epsilon+b),
\]
where $\hat{c}_i^{\dagger}=\max \left\{\epsilon, c_i-\sum_{j=1}^iq_{j,i}d_j \right\}, (\forall i \in [H]),$ is the lower bound of of $c_i^{\dagger}$, and $d_j=\|a_j-\pi^{\dagger}(x_{j})\|, \forall j\in[H]$ is the action difference at round $j$.}
$\hfill{\square}$

\begin{proof}
First, we bound the state perturbation. As is shown in Figure \ref{fig:perturbation}, denote $x_h^{\dagger(i)}$ is the state by applying the policy prior $\pi^{\dagger(i)}$ from round $i$, so the state difference at round $h$ is expressed as
\begin{equation}\label{eqn:state_diff}
\begin{split}
\|x_h-x_h^{\dagger}\|&= \|\sum_{i=1}^{h-1}(x_h^{\dagger(i+1)}-x_h^{\dagger(i)})\|\\
& \leq \sum_{i=1}^{h-1}\|(x_h^{\dagger(i+1)}-x_h^{\dagger(i)})\|\\
&\leq \sum_{i=1}^{h-1} p(h-1-i)\|(x_{i+1}-x_{i+1}^{\dagger(i)})\|\\
&\leq L_f\sum_{i=1}^{h-1} p(h-1-i)\|a_i-\pi^{\dagger}(x_{i})\|,
\end{split}
\end{equation}
where the first inequality holds by triangle inequality, and the second inequality holds by the telescoping property of $\pi^{\dagger}$.

Then, we bound the gap between the expert cost and true cost at round $h$ as
\begin{equation}\label{eqn:cost_diff}
\begin{split}
&|c_h(x_h,a_h)-c_h(x_h^{\dagger},a_h^{\dagger})|\\
=&c_h(x_h,a_h)-c_h(x_h,\pi^{\dagger}(x_h))+c_h(x_h,\pi^{\dagger}(x_h))-c_h(x_h^{\dagger},\pi^  {\dagger}(x_h^{\dagger}))\\
\leq& L_c\|a_h-\pi^{\dagger}(x_h)\|+L_c(1+L_{\pi^{\dagger}})\|x_h-x_h^{\dagger}\|\\
\leq & L_c\|a_h-\pi^{\dagger}(x_h)\|+L_c(1+L_{\pi^{\dagger}})L_{f}\sum_{j=1}^{h-1}p(h-1-j)\|a_j-\pi^{\dagger}(x_{j})\|\\
= & \sum_{j=1}^h q_{j,h}\|a_j-\pi^{\dagger}(x_{j})\|,
\end{split}
\end{equation}
where the first inequality holds by the Lipschitz continuity of cost functions $c_h$ and  policy $\pi^{\dagger}$, the second inequality holds by  \eqref{eqn:state_diff}, and $q_{j,h}=L_c\mathds{1}(j=h)+L_c(1+L_{\pi^{\dagger}})L_{f}p(h-1-j)\mathds{1}(j<h)$.
\begin{figure} 
\centering
\includegraphics[width=0.48\textwidth]{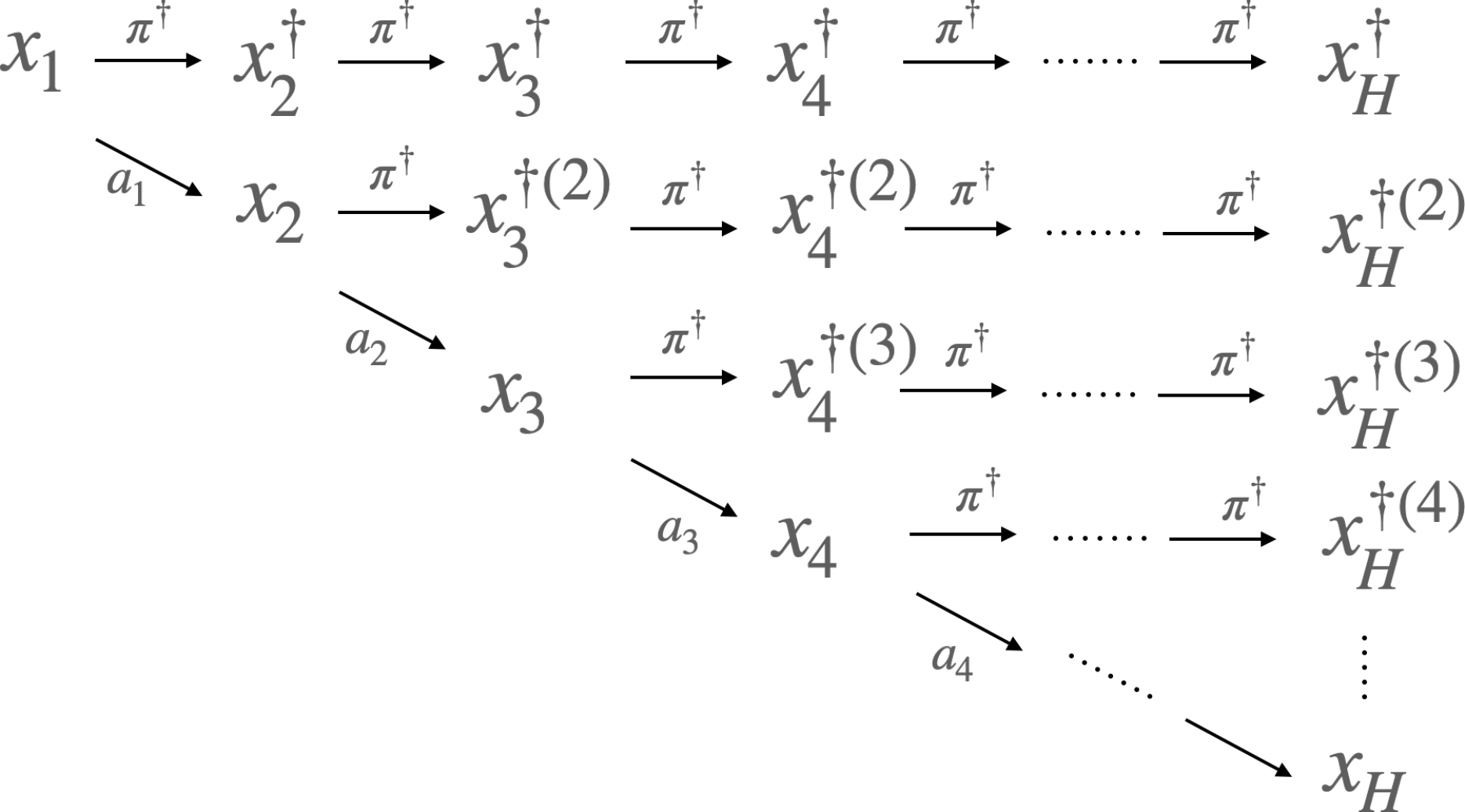}
\caption{Illustration of state perturbation}
\label{fig:perturbation}
\end{figure}

Recall that the anytime competitive constraint for any round $h'\in[H]$ is 
$
\sum_{i=1}^{h'} c_i(x_i,\pi(x_i))\leq (1+\lambda)\sum_{i=1}^{h'}c_i(x_i^{\dagger},\pi^{\dagger}(x_i^{\dagger}))+h'b
$ which is equivalent to $\sum_{i=1}^{h'} \left(c_i(x_i,\pi(x_i))-c_i(x_i^{\dagger},\pi^{\dagger}(x_i^{\dagger}))\right)\leq \lambda\sum_{i=1}^{h'}c_i(x_i^{\dagger},\pi^{\dagger}(x_i^{\dagger}))+h'b.$ Based on the cost difference bound in \eqref{eqn:cost_diff} and cost assumption in \ref{asp:Lips}, we can get a sufficient condition of the anytime competitive constraint as 
\begin{equation}
\sum_{i=1}^{h'} \sum_{j=1}^i q_{j,h}\|a_j-\pi^{\dagger}(x_{j})\|\leq h'(\lambda \epsilon+b).
\end{equation}
Since $\sum_{i=1}^{h'} \sum_{j=1}^i q_{j,h}\|a_j-\pi^{\dagger}(x_{j})\|\leq \sum_{j=1}^{h'}\Gamma_{j,j}\|a_j-\pi^{\dagger}(x_{j})\|$, this proves Proposition \ref{thm:allowed_deviation_1} for round $h=1$.

With the cost feedback $\{c_i\}_{i=1}^{h-1}$ collected at round $h$, we can get a lower bound of the prior cost. Based on the cost difference bound in \eqref{eqn:cost_diff} and cost assumption in \ref{asp:Lips} for $i=1,\cdots, h-1$ as
\begin{equation}
  c_i(x_i^{\dagger},\pi^{\dagger}(x_i^{\dagger}))\geq \hat{c}_i^{\dagger}=\max \left\{\epsilon, c_i(x_i,a_i)-\sum_{j=1}^iq_{j,i}d_j, \right\}
\end{equation}
where $d_j=\|a_j-\pi^{\dagger}(x_{j})\|, \forall j\in[H]$. 
Thus, at round $h$, with  true cost feedback from the first round to the $(h-1)$-th round, we get the sufficient condition for the anytime competitive constraint of any $h'\geq h$ as
\begin{equation}
\sum_{i=1}^{h-1}c_i(x_i,a_i)+\sum_{i=h}^{h'}\sum_{j=1}^i q_{i,j}\|a_j-\pi^{\dagger}(x_{j})\|\leq (1+\lambda)\sum_{i=1}^{h-1}\max \left\{\epsilon, c_i-\sum_{j=1}^iq_{j,i}d_j\right\}+(h'-h+1)(\lambda\epsilon+b).
\end{equation}
Recognizing that 
\begin{equation}
\begin{split}
\sum_{i=h}^{h'}\sum_{j=1}^i q_{i,j}\|a_j-\pi^{\dagger}(x_{j})\|&=\sum_{j=1}^{h-1}\sum_{i=h}^{h'}q_{j,i}\|a_j-\pi^{\dagger}(x_{j})\|+\sum_{j=h}^{h'}\sum_{i=j}^{h'}q_{j,i}\|a_j-\pi^{\dagger}(x_{j})\|\\
&\leq \sum_{j=1}^{h-1}\Gamma_{j,h}\|a_j-\pi^{\dagger}(x_{j})\|+\sum_{j=h}^{h'}\Gamma_{j,j}\|a_j-\pi^{\dagger}(x_{j})\|
\end{split}
\end{equation}
Thus, at round $h$, a sufficient condition for the anytime competitive constraint at round $h'$ can be calculated as
\begin{equation}
\begin{split}
&\sum_{j=h}^{h'}\Gamma_{j,j}\|a_j-\pi^{\dagger}(x_{j})\|\leq \sum_{i=1}^{h-1}\left((1+\lambda)\hat{c}_i^{\dagger}-c_i-\Gamma_{i,h}d_i\right)+(h'-h+1)(\lambda\epsilon+b),
\end{split}
\end{equation}
which proves Proposition \ref{thm:allowed_deviation_1}.
\end{proof}

\subsection{Proof of Corollary \ref{thm:deviation_update}}
\textbf{Corollary} \ref{thm:deviation_update}. \textit{At round 1, we initialize the allowed deviation as $D_1=\lambda \epsilon + b$. At round $h, h>1$, the allowed deviation is updated as
\[
\begin{split}
D_h=\max\left\{D_{h-1}+\lambda \epsilon+b -\Gamma_{h-1,h-1}d_{h-1},\: R_{h-1}+\lambda \epsilon +b
\right\}
\end{split}
\]
where $R_{h-1}=\sum_{i=1}^{h-1}\left((1+\lambda)\hat{c}_i^{\dagger}-c_i-\Gamma_{i,h}d_i\right)$ with notations defined in Proposition \ref{thm:allowed_deviation_1}. The $(\lambda,b)-$anytime competitiveness in Definition \ref{def:anaytime_const} are satisfied if it holds at each round $h$ that $\Gamma_{h,h}\|a_h-\pi^{\dagger}(x_h)\|\leq D_h$.} 

\begin{proof}
We prove by induction. At the first round, by Proposition \ref{thm:allowed_deviation_1}, the sufficient condition for the anytime competitive constraint at round $h'\geq 1$ is 
\begin{equation}\label{eqn:sufficientcondition_1}
\sum_{j=1}^{h'}\Gamma_{j,j}\|a_j-\pi^{\dagger}(x_{j})\|\leq D_1+(h'-1)(\lambda \epsilon+b), \:\forall h'=1,\cdots, H.
\end{equation}
Thus, with $D_1=\lambda \epsilon +b $, $\Gamma_{1,1}\|a_1-\pi^{\dagger}(x_1)\|\leq D_1$ satisfies \eqref{eqn:sufficientcondition_1} for $h'=1$.

Assume that at round $h-1$, the sufficient condition for anytime competitive constraint at round $h'\geq h-1$ is 
\begin{equation}\label{eqn:sufficientcondition_h-1}
\sum_{j=h-1}^{h'}\Gamma_{j,j}\|a_j-\pi^{\dagger}(x_{j})\|\leq D_{h-1}+(h'-h+1)(\lambda \epsilon+b), \:\forall h'=h-1,\cdots, H.
\end{equation}
Thus, at round $h$, a sufficient condition for anytime competitive constraint at round $h'\geq h$ is 
\begin{equation}\label{eqn:sufficientcondition_h}
\sum_{j=h}^{h'}\Gamma_{j,j}\|a_j-\pi^{\dagger}(x_{j})\|\leq D_{h-1}+(h'-h+1)(\lambda \epsilon+b)-\Gamma_{h-1,h-1}d_{h-1}, \:\forall h'=h,\cdots, H.
\end{equation}
Also, by applying Proposition \ref{thm:allowed_deviation_1} for round $h$, we get another sufficient condition for anytime competitive constraint at round $h'\geq h$ as
\begin{equation}\label{eqn:sufficientcondition_h_recal}
\sum_{j=h}^{h'}\Gamma_{j,j}\|a_j-\pi^{\dagger}(x_{j})\|\leq R_{h-1}+(h'-h+1)(\lambda \epsilon+b), \:\forall h'=h,\cdots, H.
\end{equation}
Choose the maximum bound in the two sufficient conditions, we get the sufficient condition for anytime competitive constraint at round $h'\geq h$ as 
\begin{equation}\label{eqn:sufficientcondition_h_comb}
\sum_{j=h}^{h'}\Gamma_{j,j}\|a_j-\pi^{\dagger}(x_{j})\|\leq D_h+(h'-h)(\lambda \epsilon+b), \:\forall h'=h,\cdots, H.
\end{equation}

Therefore, for any $h\in[H]$, the sufficient condition for anytime constraint at round $h'\geq h$ is \eqref{eqn:sufficientcondition_h_comb}. By choosing $a_h$ such that $\Gamma_{h,h}\|a_h-\pi^{\dagger}(x_h)\|\leq D_h$ at each round, \eqref{eqn:sufficientcondition_h_comb} is satisfied for round $h'=h\in[H]$ and so the anytime competitive constraints are satisfied for all rounds.
\end{proof}

\section{Proofs in Section \ref{sec:analysis}}
\subsection{Proof of Theorem \ref{thm:reward_bound}}
\textbf{Theorem \ref{thm:reward_bound}.} 
\textit{
Assume that the optimal-unconstrained policy $\pi^*$ has a value function $Q_h^{\pi^*}(x, a)$ which is $L_{Q,h}$-Lipschitz continuous with respect to the action $a$ for all $x$.  The regret between the optimal \ouralginf policy $\pi^{\circ}$ that satisfies $(\lambda,b)-$anytime competitiveness and the optimal-unconstrained policy $\pi^{*}$ is bounded as
\begin{equation}
\begin{split}
\mathbb{E}_{x_1}\left[V_{1}^{\pi^{*}}(x_1)- V_{1}^{\pi^\circ}(x_1)\right]
\leq \mathbb{E}_{y_{1:H}}\left\{\sum_{h=1}^HL_{Q,h}\left[\eta-\frac{1}{\Gamma_{h,h}}(\lambda \epsilon+b+\Delta G_h)\right]^+\right\},
\end{split}
\end{equation}
where $\eta=\sup_{x\in\mathcal{X}}\|\pi^*(x)-\pi^+(x))\|$ is the maximum action discrepancy between the policy prior $\pi^{\dagger}$ and optimal-unconstrained policy $\pi^*$; $\Gamma_{h,h}$ is defined in Proposition~\ref{thm:allowed_deviation_1}; $\Delta G_h = [R_{h-1}]^+$ is the gain of the allowed deviation by applying Proposition \ref{thm:allowed_deviation_1} at round $h$.
}

We first define a projected policy based on the optimal-unconstrained policy $\pi^*(s)=\pi^*(x)$ and augmented state $s$ as
\begin{equation} 
\pi^{\perp}_h (s) = \arg \min_{a\in\mathcal{A}_h(D_h)} \|a-\pi^*_h(s)\|.
\end{equation}
In the next Lemma, we decompose the concerned regret based on the newly-defined policy $\pi^{\perp}_h$.
\begin{lemma}\label{lma:value_decomposition}
If $Q^{\pi^*}_h$ is $L_{Q,h}-$ Lipschitz continuous with respect to the action, then  the regret between $\pi^{\circ}$ and $\pi^*$ can be bounded as
\begin{equation}
\mathbb{E}_{x_1}\left[V_1^{\pi^\circ}(s_1) - V_1^{\pi^*}(s_1) \right]\leq \mathbb{E}_{y_{1:H}}\left[ \sum_{h=1}^HL_{Q,h}\|\pi^{\perp}_h (s_h^{\perp})-\pi^{*}_h (s_h^{\perp})\|\right].
\end{equation}
\end{lemma}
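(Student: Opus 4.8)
Lemma \ref{lma:value_decomposition} — proof plan.

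The plan is to use the standard performance-difference (value-decomposition) identity, but played out along the trajectory induced by the \emph{projected} policy $\pi^{\perp}$ rather than $\pi^{\circ}$ or $\pi^{*}$. First I would observe that $\pi^{\perp}$, like $\pi^{\circ}$, always selects actions in the safe action set $\mathcal{A}_h(D_h)$, so it is a feasible \ouralginf policy; since $\pi^{\circ}$ corresponds to the \emph{optimal} ML policy $\tilde\pi^{*}$ (equivalently, $\pi^{\circ}$ maximizes $\tilde V_1$ over all policies obtained by projecting an ML output), we get $\tilde V_1^{\pi^{\circ}}(s_1)\ge \tilde V_1^{\pi^{\perp}}(s_1)$ pointwise in $s_1$, hence $V_1^{\pi^{\circ}}(s_1)\ge V_1^{\pi^{\perp}}(s_1)$. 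This reduces the claim to bounding $V_1^{\pi^{*}}(s_1)-V_1^{\pi^{\perp}}(s_1)$. The subtlety I want to flag here is making sure $\pi^{\perp}$ really is in the policy class over which $\pi^{\circ}$ is optimal: $\pi^{\perp}$ is exactly the \ouralginf projection applied to the ML output $\tilde\pi^{*}_h=\pi^{*}_h$ (the unconstrained optimum viewed as an ML policy), so it lies in $\{P_{\mathcal{A}_h(D_h)}(\tilde\pi(s_h)):\tilde\pi\in\tilde\Pi\}$ provided $\pi^{*}\in\tilde\Pi$; I would state this as the operative assumption (or note that $\tilde\Pi$ is taken rich enough to contain $\pi^{*}$).

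Next I would expand $V_1^{\pi^{*}}(s_1)-V_1^{\pi^{\perp}}(s_1)$ by the telescoping performance-difference lemma along the trajectory of $\pi^{\perp}$: writing $s_h^{\perp}$ for the (augmented) state at round $h$ under $\pi^{\perp}$,
\begin{equation}
V_1^{\pi^{*}}(s_1)-V_1^{\pi^{\perp}}(s_1)
= \sum_{h=1}^H \mathbb{E}\!\left[\, Q_h^{\pi^{*}}\!\big(s_h^{\perp},\pi^{*}_h(s_h^{\perp})\big) - Q_h^{\pi^{*}}\!\big(s_h^{\perp},\pi^{\perp}_h(s_h^{\perp})\big)\,\right],
\end{equation}
where the expectation is over the randomness $y_{1:H}$ driving the $\pi^{\perp}$-trajectory. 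This is the usual identity: the $h$-th term compares "act optimally from now on" against "take the $\pi^{\perp}$ action at $h$, then act optimally," and the residual continuation values telescope. Then I apply the $L_{Q,h}$-Lipschitz assumption on $Q_h^{\pi^{*}}(x,\cdot)$ — noting $Q_h^{\pi^{*}}$ depends on the augmented state only through $x$ — to bound each term by $L_{Q,h}\,\mathbb{E}\|\pi^{*}_h(s_h^{\perp})-\pi^{\perp}_h(s_h^{\perp})\|$, and rewrite the sum as an expectation over $y_{1:H}$ to obtain exactly the stated bound.

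The main obstacle I anticipate is the first step — justifying $V_1^{\pi^{\circ}}\ge V_1^{\pi^{\perp}}$ rigorously. One must be careful that $\pi^{\circ}$ is defined as the \ouralginf policy induced by the \emph{reward-optimal} ML policy $\tilde\pi^{*}$, so its optimality is over the image of $\tilde\Pi$ under projection, not over all policies; the claim $V_1^{\pi^{\circ}}\ge V_1^{\pi^{\perp}}$ needs $\pi^{\perp}$ to be (or be dominated by) such a projected policy, which holds when $\pi^{*}$ itself can serve as the ML policy input. A secondary, milder point is that the two trajectories (under $\pi^{*}$ and under $\pi^{\perp}$) see the same allowed-deviation dynamics only up to the action differences, so when I invoke the performance-difference identity I should be explicit that it is carried out entirely along the $\pi^{\perp}$-trajectory with $Q_h^{\pi^{*}}$ as the comparator value function, which sidesteps any need to relate the two trajectories' augmented states. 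The remaining steps are routine.
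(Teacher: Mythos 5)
Your proposal is correct and yields exactly the paper's bound, but it is organized differently from the paper's own argument. The paper proves the lemma by a single backward recursion on $V_h^{\pi^\circ}(s)-V_h^{\pi^*}(s)$: at each round $h$ along the $\pi^{\perp}$-trajectory it invokes the greedy (Bellman) optimality of $\tilde{\pi}^*$ with respect to its own $\tilde{Q}$-function to replace $\tilde{\pi}^*(s)$ by $\pi^{\perp}(s)$, splits the result into the continuation difference at $s_{h+1}$ plus the one-step substitution error $\xi_h(s)=Q_h^{\pi^*}(x,\pi^{\perp}_h(s))-Q_h^{\pi^*}(x,\pi^*_h(s))$, and telescopes before applying the Lipschitz bound. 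You instead factor the comparison through $\pi^{\perp}$ in two clean steps: (i) $V_1^{\pi^\circ}(s_1)\ge V_1^{\pi^{\perp}}(s_1)$ because $\pi^{\perp}$ is itself the \ouralginf policy induced by feeding $\pi^*$ as the ML input and $\pi^\circ$ is optimal over such projected policies, and (ii) the standard performance-difference identity between $\pi^*$ and $\pi^{\perp}$ along the $\pi^{\perp}$-trajectory with comparator $Q^{\pi^*}$, followed by the same Lipschitz step. The two routes rest on the same two pillars (optimality of $\pi^\circ$ against the projected $\pi^*$, and Lipschitz control of the action substitution along the $\pi^{\perp}$-trajectory); yours makes explicit the richness requirement that $\pi^*$ be admissible as an ML policy, which the paper uses implicitly since $\tilde{\pi}^*$ is defined by maximizing $\tilde{Q}_h(s,\cdot)$ over all of $\mathcal{A}$, while the paper's round-by-round recursion avoids invoking the performance-difference lemma as a named tool (it essentially rederives it). Note also that the lemma as printed has the difference written as $V_1^{\pi^\circ}-V_1^{\pi^*}$, whereas the quantity actually bounded and used in Theorem~\ref{thm:reward_bound} is $V_1^{\pi^*}-V_1^{\pi^\circ}$; you silently work with the intended direction, which is the right reading.
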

\begin{proof}
Let $\xi_h(s)= Q^{\pi^*}_h(x,\pi^{\perp}_h (s))-Q^{\pi^*}_h(x,\pi^{*}_h (s))$.
For any round $h$ and any augmented state $s$ obtained by \ouralginf policy, we can bound the value difference as
\begin{equation}\label{eqn:value_inter}
\begin{split}
V_h^{\pi^\circ}(s) - V_h^{\pi^*}(s)
& = \tilde{Q}_h^{\tilde{\pi}^*}(s,\tilde{\pi}^*(s) )- Q_h^{\pi^*}(s,\pi^*(s))  \\
& \leq  \tilde{Q}_h^{\tilde{\pi}^*}(s,\pi^\perp(s) )- Q_h^{\pi^*}(s,\pi^*(s))\\
& =  Q_h^{\pi^\circ}(s,\pi^\perp(s) )-Q_h^{\pi^*}(s,\pi^{\perp}(s))+Q_h^{\pi^*}(s,\pi^\perp(s))- Q_h^{\pi^*}(s,\pi^*(s))\\
& = \mathbb{E}_{s_{h+1}}\left[V_{h+1}^{\pi^\circ}(s_{h+1})-V_{h+1}^{\pi^*}(s_{h+1})\mid s,\pi^\perp(s)\right] + \xi_h(s),
\end{split}
\end{equation}
where the first equality holds since $\pi^\circ$ is the \ouralginf policy based on ML policy $\tilde{\pi}^*(s)$, the inequality holds since $\tilde{\pi}^*$ optimizes $\tilde{Q}_h^{\tilde{\pi}^*}$, the third equality holds since $\pi^{\circ}$ is the optimal ACD policy with $\tilde{\pi}^*$ as the ML model, and the last equality holds by the definition of $Q_h^{\pi}$.

Iteratively applying \eqref{eqn:value_inter}, we get
\begin{equation}\label{eqn:value_1}
\begin{split}
&V_1^{\pi^\circ}(s_1) - V_1^{\pi^*}(s_1)\\
\leq & \left(\prod_{h=1}^H\mathbb{E}_{s_{h+1}\mid s_h,\pi^\perp(s_h)}\right)\left[V_{H+1}^{\pi^\circ}(s_{H+1})-V_{H+1}^{\pi^*}(s_{H+1})\right] + \sum_{h=1}^H\left(\prod_{i=1}^{h-1}\mathbb{E}_{s_{i+1}\mid s_i,\pi^\perp(s_i)}\right)\left[\eta_h(s_h)\right]\\
=& \mathbb{E}_{y_{1:H}}\left[ \sum_{h=1}^H\eta_h(s_h^{\perp})\right]\leq \mathbb{E}_{y_{1:H}}\left[ \sum_{h=1}^HL_{Q,h}\|\pi^{\perp}_h (s_h^{\perp})-\pi^{*}_h (s_h^{\perp})\|\right],
\end{split}
\end{equation}
where $s_h^{\perp}$ is the state generated by policy $\pi^{\perp}$, the last equality holds since $V_{H+1}^{\pi}=0$, and the last inequality holds by the Lipschitz continuity of $Q_h^{\pi^*}$. 
\end{proof}

\begin{lemma}\label{lma:projection_inequality}
Given any $s_h$ generated by policy $\pi^{\perp}$, the action difference between $\pi^{\perp}$ and $\pi^*$ is bounded as
\begin{equation}
\|\pi^{\perp}_h (s_h)-\pi^{*}_h (s_h)\|\leq \left[\eta - \frac{1}{\Gamma_{h,h}}\left( \lambda \epsilon +b +\Delta G_h\right)\right]^+,
\end{equation}
where $\Delta G_h=[R_{h-1}]^+\geq 0$.
\end{lemma}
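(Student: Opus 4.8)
The plan is to work directly with the geometry of the projection that defines $\pi^{\perp}_h$. By construction, $\pi^{\perp}_h(s_h)$ is the Euclidean projection of $\pi^*_h(s_h)=\pi^*(x_h)$ onto $\mathcal{A}_h(D_h)$, and by \eqref{eqn:safe_action_set} this set is the closed ball centered at $\pi^{\dagger}(x_h)$ with radius $D_h/\Gamma_{h,h}$. I would split into two cases. If $\pi^*(x_h)\in\mathcal{A}_h(D_h)$, then $\pi^{\perp}_h(s_h)=\pi^*(x_h)$ and the left-hand side is $0$, which is dominated by the nonnegative right-hand side. If $\pi^*(x_h)\notin\mathcal{A}_h(D_h)$, the projection onto a ball lies on the sphere along the ray from the center through $\pi^*(x_h)$, so
\[
\|\pi^{\perp}_h(s_h)-\pi^*_h(s_h)\|=\|\pi^*(x_h)-\pi^{\dagger}(x_h)\|-\frac{D_h}{\Gamma_{h,h}}\le \eta-\frac{D_h}{\Gamma_{h,h}},
\]
where the inequality uses the definition $\eta=\sup_{x}\|\pi^*(x)-\pi^{\dagger}(x)\|$. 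Since this quantity is strictly positive in the second case and equals $0$ in the first, both cases are summarized by $\|\pi^{\perp}_h(s_h)-\pi^*_h(s_h)\|\le[\eta-D_h/\Gamma_{h,h}]^+$.

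It then remains to show $D_h\ge \lambda\epsilon+b+\Delta G_h$, after which monotonicity of $[\cdot]^+$ gives the claim. From the update rule \eqref{eqn:deviation_update} one immediately has $D_h\ge R_{h-1}+\lambda\epsilon+b$. I would complement this with the bound $D_h\ge\lambda\epsilon+b$, proven by induction on $h$: the base case is $D_1=\lambda\epsilon+b$; for the inductive step, note that along the $\pi^{\perp}$-trajectory the action $a_{h-1}=\pi^{\perp}_{h-1}(s_{h-1})$ is by definition drawn from $\mathcal{A}_{h-1}(D_{h-1})$, hence $\Gamma_{h-1,h-1}d_{h-1}\le D_{h-1}$, so the first branch of the maximum in \eqref{eqn:deviation_update} is at least $\lambda\epsilon+b$. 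Combining the two bounds yields $D_h\ge \lambda\epsilon+b+[R_{h-1}]^+=\lambda\epsilon+b+\Delta G_h$, which is exactly what is needed.

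The only subtle point — and the main thing to get right — is that the recursion for the allowed deviation $D_h$ must here be applied along the trajectory induced by $\pi^{\perp}$ (not by $\pi^k$ or $\pi^{\circ}$), so one has to observe that $\pi^{\perp}$ is itself a valid \ouralginf-type policy: it projects onto the same safe action sets $\mathcal{A}_h(D_h)$, and therefore preserves feasibility $\Gamma_{h,h}d_h\le D_h$ at every round. This feasibility is precisely what keeps the first branch of the max in \eqref{eqn:deviation_update} shifted by at least $\lambda\epsilon+b$; without it the induction establishing $D_h\ge\lambda\epsilon+b$ would break. Everything else — the explicit closed form for projection onto a ball and the monotonicity manipulations with $[\cdot]^+$ — is routine.
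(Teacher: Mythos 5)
Your proof is correct and follows essentially the same route as the paper's: the closed-form distance to the ball $\mathcal{A}_h(D_h)$ bounded via $\eta$, followed by the lower bound $D_h\ge \lambda\epsilon+b+[R_{h-1}]^+$ obtained from the update rule \eqref{eqn:deviation_update} together with feasibility $\Gamma_{h,h}d_h\le D_h$ along the trajectory. Your explicit induction and the remark that $\pi^{\perp}$ is itself an \ouralginf-type policy (so feasibility holds on its own trajectory) simply spell out what the paper compresses into the sentence ``Since $D_h\geq \Gamma_{h,h}d_h$ for any $h\in[H]$.''
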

\begin{proof}
Since $\pi^{\perp}_h$ is the projection of $\pi^{*}_h$ into the action norm ball $\mathcal{A}_{h}(D_h)=\left\{a \mid \Gamma_{h,h}\|a-\pi^{\dagger}(x_h)\|\leq D_h \right\}$, we have
\begin{equation}
\begin{split}
\|\pi^{\perp}_h (s_h)-\pi^{*}_h (s_h)\|&=\left[\|\pi^{\dagger}(x_h)-\pi^{*}_h (x_h)\|-\frac{D_h}{\Gamma_{h,h}}\right]^+\\
&\leq \left[\eta-\frac{D_h}{\Gamma_{h,h}}\right]^+,
\end{split}
\end{equation}
where the last inequality holds by the definition of $\eta$. 

Since $D_h\geq \Gamma_{h,h}d_h$ for any $h\in[H]$, we have $D_h\geq \lambda\epsilon +b +[R_{h-1}]^+=\lambda\epsilon +b +\Delta G$. Thus completed the proof.
\end{proof}

\textbf{Proof of Theorem \ref{thm:reward_bound}}
\begin{proof}
By Lemma \ref{lma:value_decomposition} and Lemma \ref{lma:projection_inequality}, we have
\begin{equation}
\begin{split}
\mathbb{E}_{x_1}&\left[V_{1}^{\pi^{*}}(x_1)- V_{1}^{\pi^\circ}(x_1)\right]=\mathbb{E}_{x_1}\left[V_{1}^{\pi^{*}}(s_1)- V_{1}^{\pi^\circ}(s_1)\right]\\
& \leq \mathbb{E}_{y_{1:H}}\left[ \sum_{h=1}^HL_{Q,h}\|\pi^{\perp}_h (s_h^{\perp})-\pi^{*}_h (s_h^{\perp})\|\right]\\
& \leq \mathbb{E}_{y_{1:H}}\left[ \sum_{h=1}^HL_{Q,h}\left[\eta-\frac{1}{\Gamma_{h,h}}\left(\lambda\epsilon +b +\Delta G_h\right)\right]^+\right],
\end{split}
\end{equation}
where $\Delta G_h = [R_{h-1}]^+$. Thus completes the proof.
\end{proof}

\subsection{Proof of Theorem \ref{thm:regret}}
\textbf{Theorem \ref{thm:regret}.}
\textit{Assume that the value function is bounded by $\bar{V}$. Denote a set of function as 
\begin{equation}
\mathcal{Q}=\left\{q\mid \exists g\in\mathcal{G}, \forall (s,a,v)\in \mathcal{S}\times \mathcal{A} \times \mathcal{V}, q(s,a,v)=\mathbb{E}_{f\sim g} \left[v(s')\mid s,a\right]\right\}.
\end{equation}
If $\beta_k=2(\bar{V}H)^2\log\left(\frac{2\mathcal{N}(\mathcal{Q}, \alpha, \|\cdot\|_{\infty})}{\delta}\right)+C\bar{V}H$ with $\alpha=1/(KH\log(KH/\delta))$, C being a constant, and $\mathcal{N}(\mathcal{Q}, \alpha, \|\cdot\|_{\infty})$ being the covering number of $\mathcal{Q}$, with probability at least $1-\delta$, the pseudo regret of Algorithm \ref{alg:rl} is bounded as
\begin{equation}
\mathrm{PReg}(K)\leq 1+d_{\mathcal{Q}}H\bar{V}+4\sqrt{d_{\mathcal{Q}}\beta_K KH}+H\sqrt{2KH\log(1/\delta)},
\end{equation}
where $d_{\mathcal{Q}}=\mathrm{dim}_{E}(\mathcal{Q},\frac{1}{KH})$ is the Eluder dimension of $\mathcal{Q}$ defined in \cite{Eluder_dim_russo2014learning}.}

\begin{lemma}
Assume that $g\in\mathcal{G}_k$, the difference between the reward of the optimal \ouralginf policy $\pi^\circ$ and the reward of policy $\pi^k$ is bounded as
\begin{equation}
V_1^{\pi^\circ}(s_1)-V_1^{\pi^k}(s_1)\leq \sup_{\tilde{g}\in \mathcal{G}_t}\sum_{h=1}^{H-1}\mathbb{E}_{\tilde{g}-g}\left[\tilde{V}_{h+1}^k(s_{h+1}^k)\mid s_h^k,a_h^k\right]+\sum_{h=1}^{H-1}\xi_{h+1,k},
\end{equation}
where $\xi_{h+1,t}=\mathbb{E}_{g}\left[\tilde{V}_{h+1}^k(s_{h+1})-V^{\pi^k}_{h+1}(s_{h+1})\right]-\left[\tilde{V}_{h+1}^k(s_{h+1})-V^{\pi^k}_{h+1}(s_{h+1})\right]$.
\end{lemma}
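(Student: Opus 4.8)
The argument follows the standard optimism-plus-telescoping decomposition, adapted to the augmented MDP $\tilde{\mathcal{M}}$ defined by \ouralginf. The first ingredient is \emph{optimism}. On the event $g\in\mathcal{G}_k$, I claim $V_1^{\pi^\circ}(s_1)\le\tilde V_1^k(s_1)$. Indeed, the iteration \eqref{eqn:value_iter} run with $g^k$ is a Bellman optimality recursion on the augmented state $s_h=(x_h,D_h,\{c_i\}_{i<h},\{d_i\}_{i<h})$, in which the maximized decision variable is the ML output $\tilde a_h$ while the incurred reward $r_h(x_h,a_h)$ and the $x$-transition are evaluated at the projected action $a_h=P_{\mathcal{A}_h(D_h)}(\tilde a_h)$; hence $\tilde V_1^k(s_1)$ equals the optimal value of $\tilde{\mathcal{M}}$ under the model $g^k$. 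Since Algorithm~\ref{alg:rl} chooses $g^k=\arg\max_{g'\in\mathcal{G}_k}\mathbb{E}_{g'}[V_1(s_1^k)]$ and $g\in\mathcal{G}_k$, the optimal value under $g^k$ is at least the optimal value under the true model $g$, which is exactly $\tilde V_1^{\tilde\pi^*}(s_1)=V_1^{\pi^\circ}(s_1)$ (the \ouralginf policy induced by $\tilde\pi^*$). Therefore $V_1^{\pi^\circ}(s_1)-V_1^{\pi^k}(s_1)\le \tilde V_1^k(s_1)-V_1^{\pi^k}(s_1)$.

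Next I would decompose $\tilde V_h^k(s_h^k)-V_h^{\pi^k}(s_h^k)$ one round at a time along the trajectory actually realized in episode $k$. Both value functions evaluate the \emph{same} policy $\tilde\pi^k$, i.e.\ the same executed action $a_h^k=P_{\mathcal{A}_h(D_h)}(\tilde\pi^k(s_h^k))$; the reward $r_h(x_h^k,a_h^k)$ cancels, and the only discrepancy is that $\tilde V^k$ propagates the future value under $g^k$ while $V^{\pi^k}$ propagates it under the true $g$ (the allowed-deviation coordinate $D_h$ evolves deterministically via \eqref{eqn:deviation_update} and is carried along identically in both). This gives
\begin{equation*}
\tilde V_h^k(s_h^k)-V_h^{\pi^k}(s_h^k)=\mathbb{E}_{g^k-g}\left[\tilde V_{h+1}^k(s_{h+1})\mid s_h^k,a_h^k\right]+\mathbb{E}_{g}\left[\tilde V_{h+1}^k(s_{h+1})-V_{h+1}^{\pi^k}(s_{h+1})\mid s_h^k,a_h^k\right].
\end{equation*}
I bound the first term by $\sup_{\tilde g\in\mathcal{G}_k}\mathbb{E}_{\tilde g-g}[\tilde V_{h+1}^k(s_{h+1})\mid s_h^k,a_h^k]$ since $g^k\in\mathcal{G}_k$, and I rewrite the second term as $\big(\tilde V_{h+1}^k(s_{h+1}^k)-V_{h+1}^{\pi^k}(s_{h+1}^k)\big)+\xi_{h+1,k}$, where $\xi_{h+1,k}$ is exactly the conditional-expectation-minus-realization term defined in the statement.

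Finally I would unroll this recursion from $h=1$ to $h=H$, using the terminal conditions $\tilde V_{H+1}^k=V_{H+1}^{\pi^k}=0$ (so the model-error and $\xi$ terms at $h=H$ vanish). The telescope collapses and yields $\tilde V_1^k(s_1)-V_1^{\pi^k}(s_1)\le\sum_{h=1}^{H-1}\sup_{\tilde g\in\mathcal{G}_k}\mathbb{E}_{\tilde g-g}[\tilde V_{h+1}^k(s_{h+1}^k)\mid s_h^k,a_h^k]+\sum_{h=1}^{H-1}\xi_{h+1,k}$, which combined with the optimism step proves the lemma.

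\textbf{Main obstacle.} The delicate step is the optimism claim: one must verify that \eqref{eqn:value_iter} on the augmented state genuinely computes the optimal value of $\tilde{\mathcal{M}}$ under $g^k$ — despite the fact that what is maximized is the \emph{pre-projection} ML action while costs and transitions are incurred at the \emph{post-projection} action — so that the monotonicity ``optimal value under $g^k\ge$ optimal value under $g$'' (which uses $g\in\mathcal{G}_k$ and the optimistic choice of $g^k$) is legitimate, with the deterministic history-dependent update of $D_h$ tracked consistently throughout. The per-step decomposition and the telescoping are then routine.
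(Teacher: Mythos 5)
Your proposal is correct and follows essentially the same route as the paper's proof: optimism ($V_1^{\pi^\circ}(s_1)\le\tilde V_1^k(s_1)$ when $g\in\mathcal{G}_k$, which the paper asserts in one line and you justify in more detail via the optimistic choice of $g^k$), the same per-round decomposition into a model-error term under $g^k-g$ plus the martingale-difference term $\xi_{h+1,k}$, followed by bounding the model error by the supremum over the confidence set and telescoping with $V_{H+1}=0$. No gaps; your extra care on the optimism step (value iteration over the pre-projection action with rewards and transitions evaluated at the projected action) is a sound elaboration of what the paper leaves implicit.
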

\begin{proof}
Since the true transition model $g\in\mathcal{G}_k$, we have $V_1^{\pi^{\circ}}(s_1^k)\leq \tilde{V}_{1}^k(s_1^k)$, and so
\begin{equation}
V_1^{\pi^{\circ}}(s_1^k)-V_1^{\pi^k}(s_1^k)
\leq \tilde{V}_{1}^k(s_1^k)-V_1^{\pi^k}(s_1^k).
\end{equation}
At round $h$, we have
\begin{equation}
\begin{split}
 &\tilde{V}_{h}^k(s_h^k)-V_h^{\pi^k}(s_h^k)\\
= &\left(r(s_h^k,a_h^k)+\mathbb{E}_{g^k}\left[\tilde{V}_{h+1}^k(s_{h+1})\mid s_h^k,a_h^k\right]\right)-\left(r(s_h^k,a_h^k)+\mathbb{E}_{g}\left[V_{h+1}^{\pi^k}(s_{h+1})\mid s_h^k,a_h^k\right]\right)\\
=&\sum_{f\in\mathcal{F}}\tilde{V}_{h+1}^k(f(x_h,a_h),D_{h+1})g^k(f)-\sum_{f\in\mathcal{F}}V_{h+1}^{\pi^k}(f(x_h,a_h),D_{h+1})g(f)\\
=&\sum_{f\in\mathcal{F}}\tilde{V}_{h+1}^k(f(x_h,a_h),D_{h+1})\left(g^k(f)-g(f)\right)\\
&+\sum_{f\in\mathcal{F}}\left(\tilde{V}_{h+1}^k(f(x_h,a_h),D_{h+1})-V_{h+1}^{\pi^k}(f(x_h,a_h),D_{h+1})\right)g(f)\\
=& \mathbb{E}_{g^k-g}\left[\tilde{V}_{h+1}^k(s_{h+1}^k)\mid s_h^k,a_h^k\right]+\mathbb{E}_{g}\left[\tilde{V}_{h+1}^k(s_{h+1})-V^{\pi^k}_{h+1}(s_{h+1})\mid s_h^k,a_h^k\right]
\end{split}
\end{equation}
Let $\xi_{h+1,t}=\mathbb{E}_{g}\left[\tilde{V}_{h+1}^k(s_{h+1})-V^{\pi^k}_{h+1}(s_{h+1})\right]-\left[\tilde{V}_{h+1}^k(s_{h+1})-V^{\pi^k}_{h+1}(s_{h+1})\right]$.
By the fact that $V_{H+1}=0$ and summing $\left[\tilde{V}_{h}^k(s_h^k)-V_h^{\pi^k}(s_h^k)\right]-\left[\tilde{V}_{h+1}^k(s_{h+1})-V^{\pi^k}_{h+1}(s_{h+1})\right]= \mathbb{E}_{g^k-g}\left[\tilde{V}_{h+1}^k(s_{h+1}^k)\mid s_h^k,a_h^k\right] + \xi_{h+1,t}$ from $h=1$ to $H$, we have
\begin{equation}
\begin{split}
&\tilde{V}_{1}^k(s_1^k)-V_1^{\pi^k}(s_1^k)\\
= &\sum_{h=1}^{H-1}\mathbb{E}_{g^k-g}\left[\tilde{V}_{h+1}^k(s_{h+1}^k)\mid s_h^k,a_h^k\right]+\sum_{h=1}^{H-1}\xi_{h+1,k}\\
\leq &\sup_{\tilde{g}\in \mathcal{G}_t}\sum_{h=1}^{H-1}\mathbb{E}_{\tilde{g}-g}\left[\tilde{V}_{h+1}^k(s_{h+1}^k)\mid s_h^k,a_h^k\right]+\sum_{h=1}^{H-1}\xi_{h+1,k}
\end{split}
\end{equation}
\end{proof}

\begin{lemma}[\cite{Model_based_RL_ayoub2020model}]\label{lma:confidence_width}
Let $\beta_k=2H^2\log\left(\frac{2\mathcal{N}(\mathcal{Q}, \alpha, \|\cdot\|_{\infty})}{\delta}\right)+2H(kH-1)\alpha\left\{ 2+\sqrt{\log(\frac{4kH(kH-1)} {\delta})}\right\}$ with $\alpha>0$. Then with probability $1-\delta, \delta\in(0,1)$, we have $g\in\mathcal{G}_k$ for any $k\geq 1$.
\end{lemma}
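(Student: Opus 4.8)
The plan is to show that the true transition density $g$ lies in every confidence set $\mathcal{G}_k$ simultaneously with probability at least $1-\delta$, which is the standard guarantee for value-targeted regression; since the statement is quoted from \cite{Model_based_RL_ayoub2020model}, the task is to instantiate their self-normalized least-squares concentration bound for the specific regression targets generated by \ouralg. First I would set up the regression. For each past episode $i<k$ and round $h$, let the target be $y_h^i = \tilde{V}_{h+1}^i(s_{h+1}^i)$, the stored value at the \emph{realized} next state, and let the model prediction be $X_h^i(g') = \mathbb{E}_{g'}[\tilde{V}_{h+1}^i(s_{h+1})\mid s_h^i,a_h^i]$, which is a member of $\mathcal{Q}$. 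The essential observation is that under the true model $g$ the next state $s_{h+1}^i$ is drawn from $g(\cdot\mid s_h^i,a_h^i)$, so the noise $\varepsilon_h^i := y_h^i - X_h^i(g)$ is a martingale-difference sequence with respect to the filtration generated by the history up through $(s_h^i,a_h^i)$, and $|\varepsilon_h^i|$ is bounded by $O(H)$ from the boundedness of the value functions.

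Next I would exploit optimality of the least-squares estimator $\hat{g}^k$ defined in \eqref{eqn:transition_fitting}. Subtracting the squared losses of $\hat{g}^k$ and $g$, expanding the square, and using that the target cancels up to the noise term, the inequality $L_k(\hat{g}^k)\le L_k(g)$ yields
\[
\sum_{i<k}\sum_{h=1}^H \big(X_h^i(\hat{g}^k) - X_h^i(g)\big)^2 \le 2\sum_{i<k}\sum_{h=1}^H \varepsilon_h^i \big(X_h^i(\hat{g}^k) - X_h^i(g)\big).
\]
The left-hand side is exactly the quantity that defines membership in $\mathcal{G}_k$, so it remains only to bound the martingale cross term on the right.

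The core step is a uniform concentration bound on this cross term over the function class $\mathcal{Q}$ and over all episodes $k$. The subtlety is that $X_h^i(\hat{g}^k)$ is \emph{not} predictable — $\hat{g}^k$ depends on all the data — so a direct martingale bound is unavailable and a covering argument is needed. For a \emph{fixed} predictor $X\in\mathcal{Q}$ the process $\sum \varepsilon_h^i\big(X_h^i - X_h^i(g)\big)$ is a martingale, and a self-normalized (Freedman/Azuma-type) inequality bounds it by roughly $\big(\sum (X_h^i - X_h^i(g))^2\big)^{1/2}\sqrt{\log(1/\delta)}$. I would then union-bound over an $\alpha$-net of $\mathcal{Q}$ in $\|\cdot\|_\infty$, producing the $\log\!\big(2\mathcal{N}(\mathcal{Q},\alpha,\|\cdot\|_\infty)/\delta\big)$ factor in $\beta_k$; the discretization error between $X_h^i(\hat{g}^k)$ and its net representative is absorbed into the second additive term $2H(kH-1)\alpha\{2+\sqrt{\log(4kH(kH-1)/\delta)}\}$. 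Because the self-normalized bound is time-uniform (it holds for all prefixes at once), the union over $k\ge 1$ costs nothing beyond the $kH$-dependence already present in $\beta_k$. Finally, applying this bound with $X = X(\hat{g}^k)$ and invoking weighted AM–GM $2ab\le \tfrac12 a^2 + 2b^2$ bounds the cross term by $\tfrac12\sum_{i<k}\sum_h (X_h^i(\hat{g}^k)-X_h^i(g))^2 + O(\beta_k)$; substituting into the displayed identity and absorbing the half-weighted sum on the left gives $\sum_{i<k}\sum_h (X_h^i(\hat{g}^k)-X_h^i(g))^2 \le \beta_k$ after matching constants into the definition of $\beta_k$, i.e.\ $g\in\mathcal{G}_k$ for all $k$ with probability $1-\delta$.

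The main obstacle is precisely this uniform-in-$\mathcal{Q}$-and-in-$k$ control of the martingale cross term: establishing the self-normalized concentration inequality, combining it with the covering-number union bound, and carefully tracking the discretization error so that it reproduces the stated form of $\beta_k$. Everything else — the regression decomposition, the least-squares optimality inequality, and the final AM–GM absorption — is routine once this concentration step is in place, and the whole argument can be imported from \cite{Model_based_RL_ayoub2020model} applied to the value targets $\tilde{V}_{h+1}^i$ of \ouralg.
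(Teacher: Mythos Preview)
Your proposal is correct and follows exactly the approach of \cite{Model_based_RL_ayoub2020model}, which is also what the paper does: the lemma is stated with a citation and no proof is given in the paper itself, as the result is imported directly from that reference. Your sketch of the regression decomposition, martingale noise structure, self-normalized concentration with a covering argument over $\mathcal{Q}$, and the final AM--GM absorption is precisely the argument in the cited work, instantiated for the value targets $\tilde{V}_{h+1}^i$ of \ouralg.
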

\begin{lemma}[\cite{Eluder_dim_russo2014learning}]\label{lma:estimation_error}
Let $d_{\mathcal{Q}}=\mathrm{dim}_{E}(\mathcal{Q},\frac{1}{KH})$ be the Eluder dimension of the function set $\mathcal{Q}$ defined in \eqref{eqn:valuefunctionset}. When $g\in\cap_{k\in[K]}\mathcal{G}_k$, the cumulative value estimation error is bounded as
\begin{equation}
\begin{split}
\sup_{\tilde{g}\in \mathcal{G}_t}\sum_{h=1}^{H-1}\mathbb{E}_{\tilde{g}-g}\left[\tilde{V}_{h+1}^k(s_{h+1}^k)\mid s_h^k,a_h^k\right]
\leq 1+Hd_{\mathcal{Q}}+4\sqrt{d_{\mathcal{Q}}\beta_K KH}.
\end{split}
\end{equation}
\end{lemma}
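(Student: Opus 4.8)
The plan is to recognize Lemma~\ref{lma:estimation_error} as a direct specialization of the generic ``sum of confidence widths'' bound for function classes of bounded Eluder dimension established in \cite{Eluder_dim_russo2014learning}, in the form adapted to model-based value-targeted regression in \cite{Eluder_dim_osband2014model,Model_based_RL_ayoub2020model}. The first step is to recast the left-hand side, summed over episodes $k=1,\dots,K$ as the phrase ``cumulative value estimation error'' indicates, in terms of widths of the confidence sets $\mathcal{G}_k$ evaluated along the visited trajectory. I would identify each transition model $g$ with its predictor $q_g\in\mathcal{Q}$ from \eqref{eqn:valuefunctionset}, so that for the query point $z_h^k=(s_h^k,a_h^k,\tilde V_{h+1}^k)$ one has $\mathbb{E}_{\tilde g-g}[\tilde V_{h+1}^k\mid s_h^k,a_h^k]=q_{\tilde g}(z_h^k)-q_g(z_h^k)$. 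Because the hypothesis guarantees $g\in\mathcal{G}_k$, both $q_{\tilde g}$ and $q_g$ lie in $\mathcal{G}_k$ (viewed as a subset of $\mathcal{Q}$), so taking the supremum over $\tilde g\in\mathcal{G}_k$ bounds each summand by the width
\begin{equation}
w_h^k:=\sup_{q_1,q_2\in\mathcal{G}_k}\bigl|q_1(z_h^k)-q_2(z_h^k)\bigr|.
\end{equation}
Gathering the at most $T\le KH$ query points $\{z_h^k\}$ into one ordered sequence then reduces the claim to bounding $\sum_{k,h} w_h^k$.

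The second step is to check the two ingredients that the generic Eluder lemma requires. For uniform boundedness, each $q\in\mathcal{Q}$ is a conditional expectation of a value function, hence bounded by the per-step value bound, which yields the factor entering the $H d_{\mathcal{Q}}$ term. For the statistical control, I would unpack the definition of the confidence set: every $q\in\mathcal{G}_k$ satisfies $\|q-q_{\hat g^k}\|_{E_k}^2\le\beta_k$ in the empirical norm over past queries, so by the triangle inequality any pair $q_1,q_2\in\mathcal{G}_k$ obeys $\|q_1-q_2\|_{E_k}^2\le 4\beta_k\le 4\beta_K$. This is exactly the ``small cumulative past squared width'' property needed by the pigeonhole argument of \cite{Eluder_dim_russo2014learning}: a query whose current width is large cannot be $\epsilon$-dependent on too many earlier queries, and the number of such ``independent'' occurrences is controlled by $d_{\mathcal{Q}}=\mathrm{dim}_E(\mathcal{Q},\tfrac{1}{KH})$.

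Invoking that combinatorial lemma on the sequence $\{z_h^k\}$ with discretization scale $\epsilon=\tfrac{1}{KH}$, confidence radius $\beta_K$, and horizon length $T\le KH$ then produces the three terms of the bound: the leading $1$ comes from $\epsilon T\le 1$ (matching the scale $\tfrac{1}{KH}$ chosen in $d_{\mathcal{Q}}$), the additive $H d_{\mathcal{Q}}$ from the product of the value bound and the Eluder dimension for queries with width above threshold, and $4\sqrt{d_{\mathcal{Q}}\beta_K KH}$ from a Cauchy--Schwarz summation of the remaining widths against the cumulative-squared-width budget $4\beta_K$.

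I expect the genuine technical heart to be the combinatorial pigeonhole core that converts the bound on cumulative \emph{squared} widths into a bound on cumulative widths through the Eluder dimension; since Lemma~\ref{lma:estimation_error} is stated with the citation \cite{Eluder_dim_russo2014learning}, I would treat that step as a cited black box and devote the proof to the specialization above, namely the identification of $\mathcal{G}_k$ with a subset of $\mathcal{Q}$, the reduction of the estimation error to widths using $g\in\mathcal{G}_k$, and the verification of the empirical-norm hypothesis with radius $\beta_K$. The secondary obstacle is purely bookkeeping: ensuring the per-step value bound and the radius $\beta_K$ are tracked consistently so that the boundedness constant surfaces as the $H$ factor in $H d_{\mathcal{Q}}$ and inside the square-root term, which is where the choice of $\beta_k$ in Theorem~\ref{thm:regret} is used.
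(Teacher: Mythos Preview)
Your proposal is correct and, in fact, goes well beyond what the paper does: the paper provides no proof of Lemma~\ref{lma:estimation_error} at all, simply stating it as a cited result from \cite{Eluder_dim_russo2014learning} (in the form used by \cite{Eluder_dim_osband2014model,Model_based_RL_ayoub2020model}) and invoking it directly in the proof of Theorem~\ref{thm:regret}. Your outline---identifying each $g$ with its predictor $q_g\in\mathcal{Q}$, using $g\in\mathcal{G}_k$ to reduce the estimation error to confidence-set widths, verifying the empirical-norm radius $4\beta_K$ via the triangle inequality, and then appealing to the Eluder-dimension pigeonhole lemma with scale $\epsilon=\tfrac{1}{KH}$---is exactly the standard argument from those references and correctly accounts for each of the three terms in the bound.
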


\textbf{Proof of Theorem \ref{thm:regret}}
\begin{proof}
By choosing $\alpha=1/(KH\log(KH/\delta))$ and the assumption that $V$ is upper bounded by $\bar{V}$, the $\beta_k$ in Lemma \ref{lma:confidence_width} is expressed as $\beta_k=2(\bar{V}H)^2\log\left(\frac{2\mathcal{N}(\mathcal{Q}, \alpha, \|\cdot\|_{\infty})}{\delta}\right)+C\bar{V}H$.

Since $\xi_{2,1},\cdots,\xi_{H,1},\cdots,\xi_{2,K},\cdots,\xi_{H,K}  $ is a martingale sequence,
with probability at $1-\delta$, the sampling error is bounded as
\begin{equation}
\sum_{t=1}^T\sum_{h=1}^H \xi_{h=1}^H\leq T\sqrt{2TH\log(1/\delta)}.
\end{equation}
By Lemma \ref{lma:estimation_error} and union bound, we have with probability with $1-\delta$,($\delta\in (0,1)$),
\begin{equation}
V_1^{\pi^\circ}(s_1)-V_1^{\pi^k}(s_1)\leq 1+Hd_{\mathcal{Q}}\bar{V}+4\sqrt{d_{\mathcal{Q}}\beta_K KH}+T\sqrt{2TH\log(2/\delta)}.
\end{equation}
\end{proof}

\end{document}